\documentclass{article} % For LaTeX2e
\usepackage{iclr2026_conference,times}

% Optional math commands from https://github.com/goodfeli/dlbook_notation.
%%%%% NEW MATH DEFINITIONS %%%%%

\usepackage{amsmath,amsfonts,bm}

% Mark sections of captions for referring to divisions of figures

% Highlight a newly defined term

% Figure reference, lower-case.

% Figure reference, capital. For start of sentence

% Section reference, lower-case.

% Section reference, capital.

% Reference to two sections.

% Reference to three sections.

% Reference to an equation, lower-case.
\def\eqref#1{equation~\ref{#1}}
% Reference to an equation, upper case
\def\Eqref#1{Equation~\ref{#1}}
% A raw reference to an equation---avoid using if possible

% Reference to a chapter, lower-case.

% Reference to an equation, upper case.

% Reference to a range of chapters

% Reference to an algorithm, lower-case.

% Reference to an algorithm, upper case.

% Reference to a part, lower case

% Reference to a part, upper case

\def\1{\bm{1}}

% Random variables

% rm is already a command, just don't name any random variables m

% Random vectors

% Elements of random vectors

% Random matrices

% Elements of random matrices

% Vectors

% Elements of vectors

% Matrix

% Tensor
\DeclareMathAlphabet{\mathsfit}{\encodingdefault}{\sfdefault}{m}{sl}
\SetMathAlphabet{\mathsfit}{bold}{\encodingdefault}{\sfdefault}{bx}{n}

% Graph

% Sets

% Don't use a set called E, because this would be the same as our symbol
% for expectation.

% Entries of a matrix

% entries of a tensor
% Same font as tensor, without \bm wrapper

% The true underlying data generating distribution

% The empirical distribution defined by the training set

% The model distribution

% Stochastic autoencoder distributions

 % Laplace distribution

% Wolfram Mathworld says $L^2$ is for function spaces and $\ell^2$ is for vectors
% But then they seem to use $L^2$ for vectors throughout the site, and so does
% wikipedia.

 % See usage in notation.tex. Chosen to match Daphne's book.

\PassOptionsToPackage{dvipsnames}{xcolor}
\usepackage{graphicx}
\usepackage{subfig}
\usepackage{amsmath}
\usepackage{amssymb}
\usepackage{amsthm}
\usepackage{array}
\usepackage{arydshln}
\usepackage{hyperref}
\usepackage{url}
\usepackage[capitalize,noabbrev]{cleveref}
\usepackage{graphicx, color}
\usepackage{mathtools}
\usepackage{enumerate,enumitem}
\usepackage{bbm}
\usepackage{booktabs}
\usepackage{algorithm}
\usepackage{algorithmicx}
\usepackage{algpseudocode}
\usepackage{multirow}
\usepackage{makecell}
\usepackage{float}
\usepackage{tikz}
\usepackage{tcolorbox}
\algrenewcommand\algorithmicindent{0.7em}
\usetikzlibrary{positioning, arrows.meta}
\newcommand{\Q}{\mathcal{Q}}

\newtheorem{theorem}{Theorem}[section]
\newtheorem{lemma}[theorem]{Lemma}
\newtheorem{proposition}[theorem]{Proposition}

\newtheorem{remark}[theorem]{Remark}

\newtheorem{corollary}[theorem]{Corollary}

\title{Qronos: Correcting the Past by Shaping the Future... in Post-Training Quantization}

% Authors must not appear in the submitted version. They should be hidden
% as long as the \iclrfinalcopy macro remains commented out below.
% Non-anonymous submissions will be rejected without review.

% \author{Antiquus S.~Hippocampus, Natalia Cerebro \& Amelie P. Amygdale \thanks{ Use footnote for providing further information
% about author (webpage, alternative address)---\emph{not} for acknowledging
% funding agencies.  Funding acknowledgements go at the end of the paper.} \\
% Department of Computer Science\\
% Cranberry-Lemon University\\
% Pittsburgh, PA 15213, USA \\
% \texttt{\{hippo,brain,jen\}@cs.cranberry-lemon.edu} \\
% \And
% Ji Q. Ren \& Yevgeny LeNet \\
% Department of Computational Neuroscience \\
% University of the Witwatersrand \\
% Joburg, South Africa \\
% \texttt{\{robot,net\}@wits.ac.za} \\
% \AND
% Coauthor \\
% Affiliation \\
% Address \\
% \texttt{email}
% }

\author{%
  Shihao Zhang\thanks{Equal contribution.} \\
  Department of Mathematics\\
  University of California, San Diego\\
  \texttt{shz051@ucsd.edu}
  \And
  Haoyu Zhang\footnotemark[1] \hspace{3.2cm} \\
  Department of Mathematics\\
  University of California, San Diego\\
  \texttt{haz053@ucsd.edu}
  \AND
  Ian Colbert \hspace{5.8cm} \\
  Software Architecture \\
  Advanced Micro Devices, Inc.   \\
  \texttt{icolbert@amd.com}
  \And
  Rayan Saab \\
  Department of Mathematics \& HDSI\\
  University of California, San Diego\\
  \texttt{rsaab@ucsd.edu}
}

% The \author macro works with any number of authors. There are two commands
% used to separate the names and addresses of multiple authors: \And and \AND.
%
% Using \And between authors leaves it to \LaTeX{} to determine where to break
% the lines. Using \AND forces a linebreak at that point. So, if \LaTeX{}
% puts 3 of 4 authors names on the first line, and the last on the second
% line, try using \AND instead of \And before the third author name.
\usepackage[dvipsnames]{xcolor}
\usepackage{colortbl}

\newcommand{\ICnew}[1]{\textcolor{black}{#1}}
\newcommand{\ICnewnew}[1]{\textcolor{black}{#1}}
\newcommand{\RSnew}[1]{\textcolor{black}{#1}}
\newcommand{\SZnew}[1]{\textcolor{black}{#1}}
\newcommand{\SZnewnew}[1]{\textcolor{black}{#1}}
\newcommand{\HZnew}[1]{\textcolor{black}{#1}}
\newcommand{\HZnewnew}[1]{\textcolor{black}{#1}}

\newcommand{\BiDQ}{Qronos}

\iclrfinalcopy % Uncomment for camera-ready version, but NOT for submission.
\begin{document}

\maketitle

\begin{abstract}
\vspace{-0.3cm}
\setcounter{footnote}{0}  
We introduce Qronos, a new post-training quantization algorithm that not only explicitly corrects errors due to both weight and activation quantization, but also corrects errors accumulated from previously quantized layers. Our iterative algorithm is based on an interpretable and disciplined optimization framework that surpasses existing data-driven approaches. At each step, Qronos alternates between error correction and diffusion via optimal update rules. 
Importantly, we prove that Qronos admits an equivalent formulation that significantly improves algorithmic efficiency; we use our discovery to reduce peak memory usage by \(18\times\) on Llama3 8B, and our scaling analysis shows a speedup of up to  \(13.8\times\) for a single-layer microbenchmark. We demonstrate compatibility with existing transformation techniques such as Hadamard-based incoherence processing and weight-activation scaling equalization, among others. We evaluate Qronos using recent language models in the Llama3 and Qwen3 families; Qronos consistently outperforms previous state-of-the-art adaptive rounding methods when quantizing the weights, activations, and/or KV caches to 4 bits or fewer. 
\end{abstract}

\section{Introduction}
Recent advances in post-training quantization (PTQ) have enabled the practical use of few-bit weights and activations for large language model (LLM) inference, typically by focusing on one or both aspects of the quantization pipeline, visualized in Figure~\ref{fig:quantization pipeline}. The first aspect involves modifying the weights and activations of a model to make them more amenable to quantization, often through transformations that exploit invariances within the compute graph. The second aspect more directly concerns the design of the quantization mapping itself. It involves using data to minimize quantization error by either calibrating the quantization grid\ICnewnew{, which is} defined by a bit width, scaling factor, and zero point\ICnewnew{, }or adaptively rounding the (potentially transformed) weights.

The latest innovations in PTQ, including \cite{quarot, spinquant}, among many others, are skewed towards proposing and improving transformations that address the quantization challenges exacerbated in LLMs. These studies often only consider round-to-nearest (RTN) and OPTQ \citep{frantar2022gptq}, also known as GPTQ. 
Meanwhile, our work explicitly focuses on improving the rounding method while remaining compatible with these transformations.

{\bf Contributions.} 
\ICnewnew{We introduce Qronos as a new scalable algorithm that not only explicitly corrects quantization error in both the weights and activations, but also residual quantization error coming from previously quantized layers. In contrast, OPTQ can only correct weight quantization error.}
% We introduce Qronos---a new post-training quantization algorithm that 
% % sequentially rounds {and updates} neural network weights.
% \ICnew{not only explicitly corrects errors due to both weight and activation quantization, but also corrects errors accumulated from previously quantized layers.}
% % \BiDQ~demonstrably surpasses OPTQ and GPFQ by explicitly correcting quantization error in both the weights and activations of previous layers while diffusing error into future weights.
We \ICnewnew{derive} \BiDQ~in a well-disciplined and mathematically interpretable form, then rigorously derive an equivalent \ICnewnew{efficient} implementation \ICnewnew{(see \cref{thm:bidq_equivalence})} \ICnewnew{that significantly improves algorithm scaling (see \cref{remark: memory} and \cref{sec:runtime_analysis}).}
% \SZnote{Remove the details of improvement to shorten technical experiment details in introduction} %, yielding \ICnew{an \(18\times\) reduction in peak memory usage for Llama3 8B and} up to a \(13.8\times\) improvement in algorithm runtime for a single-layer microbenchmark. 
\SZnew{As a non-trivial by-product, we address a theoretical blind spot of OPTQ by deriving a novel interpretation (\cref{thm:GPTQ2}), which shows that its \ICnewnew{seemingly} local greedy update rules in fact correct the \ICnew{weight} quantization error \ICnew{accumulated} over all previous iterations.
\ICnew{Our novel interpretation also offers} clear geometric \ICnew{insights}: \ICnew{at each step, OPTQ performs an} optimal grid \ICnew{selection} followed by an orthogonal projection onto a lower dimensional hyperplane spanned by future columns of the data matrix. This is one of the first results on the geometry of LLM quantization, among a few concurrent works \citep{birnick2025lattice, chen2025geometry}.}
% \ICnew{As further discussed in \cref{sec:QronosMain}, Qronos improves over OPTQ by accounting for not only the input data, but also how it drifts as layers are quantized.}

% \ICnote{all, what do you think about mentioning Qronos insights here? Is that stepping into the other paper?}\HZnote{For the insights, we can emphasize we can correct the error from past layers instead of only within a layer.} \ICnote{I am thinking of the result in the theory paper. How do we say that without creating a circular reference? Do we care about creating a circular reference?} \SZnote{I think we should avoid circular reference and don't talk about insight of success of Qronos. From the reviewer's point of view, Qronos is a new method submitted September without being peer-reviewed.} \ICnote{Ok, I agree with everything said. Just made a few more changes. We still need to figure out how to bring this last piece back to "Qronos is still the best" then.}

% \SZnote{emphasize more about the simplified version is a result of our research.}\HZnote{Based on Shihao's note, I think maybe we should clearly state and list our novelties in orders here as a summary so the reviewer won't attack us for that: 1. we designed a theoretically justified algorithm that explicitly corrects the error from previous layers. 2. we provide a mathematically equivalent formulation that allows for fast and efficient implementation. 3. feel free to add} \ICnote{It's a good idea.}\SZnote{I made some edition here.}

\SZnewnew{We evaluate \BiDQ~on the Llama3 \citep{grattafiori2024llama} and Qwen3 \citep{yang2025qwen3} model families}, and compare against RTN, OPTQ, GPFQ \citep{lybrand2021greedy} and GPTAQ \citep{li2025gptaq} while demonstrating compatibility with notable transformations for both weight-only quantization 
% (\textit{e.g.}, SmoothQuant \citep{smoothquant} and incoherence processing \cite{quip}) 
and weight-activation quantization.
% \SZnote{I removed the `...qwen3 in the appendix...' sentence, since we now have qwen3 in the main text} 
To our knowledge, this is the first work to isolate the impact of the rounding algorithm through a carefully designed experimental setup \ICnew{that fixes the quantization grid for each transformation method (or \ICnew{lack thereof})}. Our experiments show that \BiDQ ~consistently yields marked improvement {over existing methods, as \SZnewnew{highlighted} in \cref{tbl:2b_and_below}.}

\begin{table*}[h]
\centering
% \caption{\textbf{Weight-only quantization of Llama3 \SZnewnew{foundation} models.} We jointly apply Hadamard-based incoherence processing \citep{quarot} and weight magnitude reduction \citep{zhang2024magr} as quantization transforms (stage 1 in Figure~\ref{fig:quantization pipeline}) and compare different rounding functions (stage 2).}
\caption{\textbf{Weight-only quantization of Llama3 \SZnewnew{foundation} models.} \HZnewnew{We jointly apply Hadamard-based incoherence processing \citep{quarot} and MagR \citep{zhang2024magr} as quantization transforms (stage 1 in Figure~\ref{fig:quantization pipeline}) and compare different rounding methods (stage 2).}}
\vspace{-0.3cm}
\resizebox{0.6\textwidth}{!}
{\begin{tabular}{clccc!{\vrule width 0.5pt}ccc}
\toprule
 &  & \multicolumn{3}{c}{\textbf{WikiText2} ($\downarrow$)} & \multicolumn{3}{c}{\textbf{0-shot} ($\uparrow$)} \\
 &  & 1B & 3B & 8B & 1B & 3B & 8B \\ %\cmidrule(l){3-10}
 \midrule
BF16 & - & 8.9 & 7.1 & 5.9 & 59.4 & 67.5 & 74.4 \\  
\midrule
\multirow{5}{*}{2-bit} & RTN & 3e3 & 5e3 & 3e3 & 32.4 & 32.2 & 33.0 \\
& OPTQ & 24.6 & 13.2 & 10.4 & 39.3 & 47.3 & 55.2 \\
& GPFQ & 25.8 & 14.4 & 11.3 & 38.6 & 46.9 & 51.8 \\
& GPTAQ & 22.0 & 12.2 & 9.6 & 39.8 & 49.2 & 54.8 \\
& \BiDQ & \textbf{17.8} & \textbf{11.4} & \textbf{9.3} & \textbf{42.6} & \textbf{50.7} & \textbf{55.8} \\
\midrule
\multirow{5}{*}{1.58-bit} & RTN & 5e5 & 4e4 & 9e4 & 32.3 & 32.9 & 32.2 \\
& OPTQ & 2e2 & 52.0 & 43.3 & 32.7 & 32.5 & 34.9 \\
& GPFQ & 1e2 & 51.3 & 35.8 & 32.4 & 32.6 & 33.4 \\
& GPTAQ & 99.0 & 41.8 & 35.3 & 33.3 & 33.7 & 34.7 \\
& \BiDQ & \textbf{39.3} & \textbf{22.8} & \textbf{18.0} & \textbf{34.8} & \textbf{36.5} & \textbf{37.8} \\
\bottomrule
\end{tabular}}
\label{tbl:2b_and_below}
\end{table*}

\section{Background and Related Work}
\label{sec:background}

\begin{figure}[t]
    \centering
    \includegraphics[width=\linewidth]{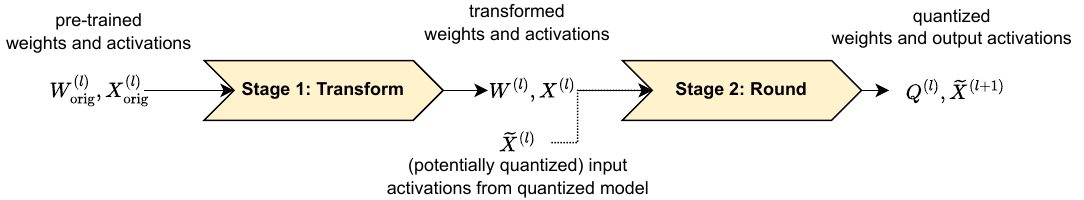}
    \caption{The modern quantization pipeline is typically a two-stage process consisting of (1) transformations that make weights and/or activations more amenable to quantization, followed by (2) rounding functions that map weights and/or activations onto a quantization grid.}
    \label{fig:quantization pipeline}
\end{figure}

We first provide a short review of prior works that focus on the two key aspects of quantization we have mentioned: transformation techniques and rounding schemes. Figure~\ref{fig:quantization pipeline} illustrates how these two \ICnew{aspects} interact within the quantization pipeline.

\textbf{Methods based on transformations.} Many recent works propose transformations of weights and/or activations to facilitate quantization. One line of work, initially proposed for MobileNets \citep{nagel2019data}, exploits scaling invariance in neural network compute graphs to equalize the range or precision of weights and activations before quantization. 
Recent variants leverage scale invariance to redistribute quantization difficulty between weights and activations, with various proposals for learning scales or ranges based on custom objective functions \citep{smoothquant, omniquant, awq}.
Another line of work uses rotations within a compute graph to normalize weight and activation distributions, initially leveraging random orthogonal rotations to promote weight incoherence \citep{quip}.
Recent variants employ efficient Hadamard rotations \citep{quip2, quarot}, Stiefel manifold optimizations \citep{spinquant, hu2025ostquant}, and rotation expansion techniques \citep{adepu2024framequant, franco2025improving}. 
Finally, distinct from these invariance-based approaches, MagR \citep{zhang2024magr} directly minimizes the $\ell_\infty$ norm of weights via proximal gradient descent to reduce dynamic range before quantization. While we do not introduce novel transformations of this type in this work, we demonstrate that existing transformations can be combined with our proposed method.

\textbf{Methods based on rounding.}
The earliest line of work on rounding relies on continuous optimization strategies based on gradient descent \citep{adaround}.
Although more recent methods exist \citep{adaquant, li2021brecq}, they \ICnew{had not been} commonly evaluated on LLMs due to \ICnew{their} computational cost \ICnew{until \cite{cheng2024optimize}}.
\ICnew{Thus,} early work on LLMs focused on grid scaling or shifting to reduce weight quantization error; for example, LLM.int8() \citep{dettmers2022gpt3} and ZeroQuant \citep{zeroquant} directly round to nearest after heuristically selecting the quantization grid (\textit{i.e.}, bit width, scaling factors, and zero points). 
% \SZnew{Autoround \citep{cheng2024optimize} applies SignSGD for easy adjustment of the grid parameters.}
% Recent variants have explored more intricate quantization grids, \textit{e.g.}, vector quantization \citep{quip2} or trellis codes \citep{qtip}, that exploit knowledge or assumptions of the data. \SZnote{Maybe downplay VQ and add citation ofs autoround}
% \ICnew{However, current quantization grid parameter tuning methods %---regardless of how well they tune---
% have fundamental limitations in aligning the layer-wise outputs of the quantized models with those of the full-precision model due to the accumulation of quantization errors, as we explain  \cref{appendix:optq_vs_qronos}.} %\HZnote{Are we over-claiming this? In the Appendix, we only explained why Qronos outperforms OPTQ and tuning grids can not close the gap. The core reason is that those methods target at~\ref{eq:quant_err} instead of~\ref{obj}. But if one tunes grids using \ref{obj} as the loss, we can not theoretically say that it won't work.}% \SZnew{However, the quantization grid parameters has its limitation in aligning the layer-wise outputs of the quantized model with those of the full-precision model as explained in \cref{appendx:scales} with more theoretical insights.} 
The most relevant line of work to ours adopts principled discrete optimization using greedy, \SZnew{gradient-free} rounding strategies to select quantized weights to minimize the layer-wise reconstruction error, and includes OBQ \citep{obq}, OPTQ \citep{frantar2022gptq}, GPFQ \citep{lybrand2021greedy, zhang2023post} and GPTAQ \citep{li2025gptaq}. \BiDQ~falls within this category.
% \ICnote{Need to add GPTAQ reference.}

{\bf Notation.} Throughout the paper, the weight matrix of a layer is denoted by $W \in \mathbb{R}^{N \times N'}$, where each of the $N'$ columns represents a $N$-dimensional channel. $\mathcal{A}$ denotes the discrete quantization grid (or alphabet) used for weight quantization, and  $\mathcal{Q}$ denotes the corresponding RTN operator associated with $\mathcal{A}$, given by
\(
\mathcal{Q} (W) := s \cdot \left( \text{clip} \left( \left\lceil \frac{{W}}{s} \right\rfloor + z ; \min \mathcal{A}, \max \mathcal{A} \right) - z \right).
\)
Here, $\textrm{clip}(x;a_{\min},a_{\max}) = \min\{\max\{x,a_{\min}\},a_{\max}\}$, while
the quantization step size (or scaling factor) is denoted by~$s$ and the quantization grid is shifted by an offset denoted by~$z$, often referred to as a zero point.
\ICnew{We specify our selection of $s,z \in \mathbb{R}^{N'}$ for the various settings in \cref{sec:exp}.}
% Depending on the setting, different channels (\textit{i.e.}, columns) of the weight matrix may have different quantization grids and RTN operators due to scaling factor granularity (\textit{e.g.}, per-channel or per-group quantization).
% With a slight abuse of notation, we use the same symbols $\mathcal{A}$ and $\mathcal{Q}$ for all entries.  
When quantizing $W$, we use $X \in \mathbb{R}^{m \times N}$ to denote the input calibration dataset of $m$ samples (\textit{e.g.}, tokens) for the layer, resulting from the original pre-trained model, and $\widetilde{X} \in \mathbb{R}^{m \times N}$ to denote the input calibration dataset coming from the partially quantized model. Given a vector $v \in \mathbb{R}^n$, we use $v_i$ for its $i$-th entry, $v_{\geq j}$ for the subvector $(v_j, \dots, v_n)^\top$, and we define $v_{\leq j}$ analogously. $\|v\|$ is the Euclidean norm of $v$. Given a matrix $A \in \mathbb{R}^{m \times n}$, we use $A_i$ to denote its $i$-th column. We use $A_{\geq j}$ to denote the submatrix $(A_j, \dots, A_n)$. %Let $I$ and $J$ be two index sets. We use the notation 
% Similarly, we use $A_{\mathcal{I}, \mathcal{J}}$ to represent the submatrix of $A$ consisting of rows indexed by $\mathcal{I}$ and columns indexed by $\mathcal{J}$. 
\ICnew{Similarly}, $A_{\geq 2, \geq 2}$  denotes the submatrix of $A$ obtained by removing the first row and the first column. 
% $\|A\|_{F}$ denotes the Frobenius norm of $A$.  
We use $\text{col}(A)$ to denote the column space of $A$. $P_A$ is the orthogonal projection onto $\text{col}(A)$, and $P_{A^\perp}$ the projection onto its orthogonal complement. 
Throughout this paper, all indices start from 1, following the standard mathematical convention.

\textbf{Layer-wise reconstruction and error correction.}
Data-driven weight quantization methods typically aim to approximately minimize\footnote{\Eqref{eq:quant_err} is an instance of integer least-squares problems, which are known to be NP-hard \citep{hassibi2002expected}. Thus, the best that one can hope for are approximate solutions.} the layer-wise reconstruction error given by 
\begin{gather}\label{eq:quant_err}
    \min_{Q \in \mathcal{A}^{N \times N'}} \|XW - XQ\|_F^2.
\end{gather}
At an arbitrary layer, the goal is to compute a quantized weight matrix $Q \in \mathcal{A}^{N \times N'}$
% by approximately solving \Eqref{eq:quant_err} to
\ICnew{that preserves} the output activations $XW$ under quantization.
In practice, however, quantizing weights in earlier layers affects the input to subsequent layers. Let $\widetilde{X} \in \mathbb{R}^{m \times N}$ denote the  activation matrix produced by a partially quantized model, where earlier layers have already been quantized.
To account for the propagation of quantization error, we  use a modified formulation, instead of \Eqref{eq:quant_err}, that targets the mismatch between the original output $XW$ and $\widetilde{X}Q$ by approximately solving
\begin{gather}\label{obj}
\min_{Q \in \mathcal{A}^{N \times N'}} \|XW - \widetilde{X}Q\|_F^2.\end{gather} The type of mismatch in this formulation is typically not addressed in the literature but arises naturally in both weight-only and weight-activation quantization settings. For instance, in weight-only quantization, $\widetilde{X}$ arises as the output of previously quantized layers, while in weight-activation quantization, one may encounter $\Q(\widetilde{X})$ rather than $\widetilde{X}$ if activations are quantized. Throughout this paper, we use the notation $(X, \widetilde{X})$ to refer generically to mismatched input pairs.

\section{\BiDQ}\label{sec:QronosMain}
We begin by describing the iterations associated with {{\BiDQ}} in \cref{sec:kronos}. The iterations follow a disciplined and mathematically interpretable framework that alternates between error correction and diffusion using optimal update rules. We then prove that the explicit solutions to these minimization problems admit {an efficient implementation}. In \cref{sec:theo inter and int}, we provide deeper intuition behind Qronos in the context of previous state-of-the-art rounding algorithms, namely GPFQ and OPTQ. We also derive a novel interpretation of OPTQ \SZnew{(\cref{thm:GPTQ2})}, which shows that it corrects the cumulative \ICnew{weight} quantization error incurred over all the previous iterations. The proofs for all results in \SZnew{\cref{sec:QronosMain} are \RSnew{provided} in the appendix.}

% \begin{algorithm}[t]
% \caption{Qronos}
% \label{Qronos++}
% \begin{algorithmic}
% \State $H^{-1} = (\widetilde{X}^\top \widetilde{X})^{-1} = LL^\top $ 
% %\State $G = \widetilde{X}^T X$
% \For{every $w$ in $W$ (in parallel)}%$j = 1$ to $N$} 
% \State  \( q = \mathbf{0}^{N} \)
% \State $w^{(0)}\leftarrow \text{copy}(w)$
% \State %$q_{1}=\mathcal{Q}\left([\widetilde{X}_1^{\top} (Xw - \widetilde{X}_{\geq2}\widetilde{w}_{\geq 2})]/\|\widetilde{X}_1\|^2\right)$
% $q_{1}=\mathcal{Q}\left(\dfrac{\widetilde{X}_1^{\top} (Xw - \widetilde{X}_{\geq2}w^{(0)}_{\geq 2})}{\|\widetilde{X}_1\|^2}\right)$
% \Comment{Calc. $q_{1}$ from Eq. \ref{bidq_fast_first_q}, use Eq. \ref{memory_eff_q} for a memory efficient ver.}
%     \State $w^{(1)}_{\geq 2}=\widetilde{X}_{\geq2}^{\dagger}\left(Xw - \widetilde{X}_{1}q_{1}\right)$
% \Comment{Calc. $w^{(1)}_{\geq 2}$ from Eq. \ref{bidq_fast_first_w}, see Eq. \ref{memory_eff_w} for a memory efficient ver.}

% \For{$t = 2$ to $N$} 
% \Comment{Efficient implementation (\cref{thm:GPTQ1})}
%     \State $q_{t}= \Q(w^{(t-1)}_{t})$ 
%     \State \( w^{(t)}_{\geq t+1} %\gets \widetilde{w}_{\geq j} 
%     = w^{(t-1)}_{\geq t+1}- L_{\geq t+1,t}\cdot (w^{(t-1)}_{t} - q_{t})/L_{tt}  \)
% %\EndIf
% \EndFor
% \EndFor
% \State \Return \( Q \)
% \end{algorithmic}
% \end{algorithm}

\subsection{Algorithm and Efficient Implementation}\label{sec:kronos}

Let us first note that \BiDQ~can process each column $w \in \mathbb{R}^N$ of $W \in \mathbb{R}^{N \times N'}$ independently and in parallel to produce each column $q \in \mathcal{A}^{N}$ of $Q \in \mathcal{A}^{N \times N'}$. Ideally, the goal is to find $q$ that minimizes
\(
\frac{1}{2}\|X w - \widetilde{X} q\|^2
\).
Since this problem is NP-hard, we propose an efficient sequential algorithm to approximate its solution. 
At each iteration, {Qronos} first selects the quantized weight  that optimally corrects the current approximation error, holding the remaining weights fixed; see~\Eqref{original_bidq_update_q} below.
It then updates the unquantized weights to optimally compensate for the rounding error, a process we refer to as {error diffusion}; see~\Eqref{original_bidq_update_w}. 
% \ICnote{Should we capitalize ``Equation"?}\HZnote{I think so, just replaced all of them}

Let $w$, without superscripts or subscripts, denote the original unquantized weights. After determining $q_{t-1}$, let $w^{(t-1)}_{\geq t}$ represent the updated unquantized weights corresponding to indices $t$ through $N$. The full state of the algorithm after step $t-1$ is thus given by the vector
\(
w^{(t-1)} = (q_{\leq t-1}, w^{(t-1)}_{\geq t}),
\)
with the initialization $w^{(0)} = w$. At  step $t$, the algorithm alternates between selecting $q_t$ through error correction and updating the remaining weights through error diffusion. The update rules are given by
\begin{align}
    q_t &= \underset{p \in \mathcal{A}}{\mathrm{argmin}} \, \frac{1}{2} \| Xw - \sum_{j=1}^{t-1} q_j \widetilde{X}_j - p \widetilde{X}_t - \sum_{j=t+1}^{N} w^{(t-1)}_j \widetilde{X}_j \|^2, \label{original_bidq_update_q} \\
    w^{(t)}_{\geq t+1} &= \underset{(v_{t+1}, \dots, v_N) \in \mathbb{R}^{N - t}}{\mathrm{argmin}} \, \frac{1}{2} \| Xw - \sum_{j=1}^{t} q_j \widetilde{X}_j - \sum_{j=t+1}^{N} v_j \widetilde{X}_j \|^2. \label{original_bidq_update_w}
\end{align}
These optimization problems admit the following closed-form solutions (see \cref{prop:qw update}):
\begin{align}
    q_t & = \mathcal{Q}\left(\frac{\left\langle Xw - \sum_{j=1}^{t-1} q_j \widetilde{X}_j - \sum_{j=t+1}^{N} w^{(t-1)}_j \widetilde{X}_j, \widetilde{X}_t \right\rangle}{\|\widetilde{X}_t\|^2}\right), \label{bidq:closed-form q} \\
    w^{(t)}_{\geq t+1} & = \widetilde{X}_{\geq t+1}^{\dagger} \left( Xw - \widetilde{X}_{\leq t} q_{\leq t} \right). \label{bidq:closed-form w}
\end{align}

While these expressions follow directly from the optimization problems, computing $q_t$ and $w^{(t)}_{\geq t+1}$ in this form is not computationally efficient and scales poorly, as we will show in \cref{sec:runtime_analysis}. 
To address this, we present \cref{thm:bidq_equivalence}, which shows that for all $t \geq 2$, $q_t$ can be computed via RTN, enabling a simpler implementation. In \cref{thm:GPTQ1}, we further show that the update for $w^{(t)}_{\geq t+1}$ also admits an efficient implementation using Cholesky decomposition to solve the associated least-squares problem. Together, these results yield a practical and scalable implementation of Qronos.

\begin{theorem}\label{thm:bidq_equivalence}
Let $(q_t, w^{(t-1)}_{\geq t})$ be the iterates generated by \Eqref{original_bidq_update_q} and \Eqref{original_bidq_update_w}, with initialization $w^{(0)}_{\geq 1} = w$. Define an alternative sequence $(\hat{q}_t, \hat{w}^{(t-1)}_{\geq t})$ using the same initialization $\hat{w}^{(0)}_{\geq 1} = w$, by setting
\begin{align}
    \hat{q}_1 &= \underset{p \in \mathcal{A}}{\arg\min}\, \frac{1}{2} \| Xw - p\widetilde{X}_1 - \sum_{j=2}^{N} w_j \widetilde{X}_j \|^2, \label{bidq_fast_first_q} \\
    \hat{w}^{(1)}_{\geq 2} &= \underset{(v_2, \dots, v_N) \in \mathbb{R}^{N-1}}{\arg\min}\, \frac{1}{2} \| Xw - \hat{q}_1 \widetilde{X}_1 - \sum_{j=2}^{N} v_j \widetilde{X}_j\|^2, \label{bidq_fast_first_w}
\end{align}
and, for $t = 2, \dots, N$, define
\begin{align}
    \hat{q}_t &= \mathcal{Q}(\hat{w}^{(t-1)}_t), \label{bidq_fast_rest_q} \\
    \hat{w}^{(t)}_{\geq t+1} &= \underset{(v_{t+1}, \dots, v_N) \in \mathbb{R}^{N - t}}{\arg\min}\, \frac{1}{2} \| (\hat{q}_t - \hat{w}^{(t-1)}_t)\widetilde{X}_t + \sum_{j = t+1}^{N} (v_j - \hat{w}^{(t-1)}_j)\widetilde{X}_j \|^2. \label{bidq_fast_rest_w}
\end{align}
Then for $t = 1, \dots, N$, the two procedures yield identical iterates:  $(q_t, w^{(t-1)}_{\geq t}) = (\hat{q}_t, \hat{w}^{(t-1)}_{\geq t})$.
\end{theorem}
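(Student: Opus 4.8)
The plan is an induction on the step index, powered by a single orthogonality property of the original iterates \eqref{original_bidq_update_q}--\eqref{original_bidq_update_w}. For $s\geq 1$, let $r^{(s)} := Xw - \widetilde X_{\leq s}\,q_{\leq s} - \widetilde X_{\geq s+1}\,w^{(s)}_{\geq s+1}$ denote the residual right after the $s$-th diffusion step. Since $w^{(s)}_{\geq s+1}$ is a minimizer of the least-squares objective \eqref{original_bidq_update_w}, its normal equations read $\widetilde X_{\geq s+1}^{\top} r^{(s)} = 0$; equivalently, $r^{(s)}$ is orthogonal to $\text{col}(\widetilde X_{\geq s+1})$. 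This is the only structural fact the argument needs.

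From it I extract two consequences, both valid for $t\geq 2$. \emph{(Fact 1: error correction collapses to round-to-nearest.)} The argument of the minimization \eqref{original_bidq_update_q} equals $r^{(t-1)} + (w^{(t-1)}_t - p)\widetilde X_t$, and since $r^{(t-1)}\perp\widetilde X_t$ by the orthogonality property with $s=t-1$, the Pythagorean identity turns the objective into $\tfrac12\|r^{(t-1)}\|^2 + \tfrac12\,(w^{(t-1)}_t - p)^2\,\|\widetilde X_t\|^2$, which is minimized over $p\in\mathcal A$ exactly at $p=\mathcal Q(w^{(t-1)}_t)$; hence $q_t = \mathcal Q(w^{(t-1)}_t)$, i.e. the form \eqref{bidq_fast_rest_q} (this can also be read off the closed form \eqref{bidq:closed-form q}). \emph{(Fact 2: the diffusion objective splits.)} Inserting $\pm\,\widetilde X_{\geq t}\,w^{(t-1)}_{\geq t}$ into the argument of \eqref{original_bidq_update_w} rewrites it as $r^{(t-1)} + \big[(w^{(t-1)}_t - q_t)\widetilde X_t + \sum_{j\geq t+1}(w^{(t-1)}_j - v_j)\widetilde X_j\big]$; the bracketed vector lies in $\text{col}(\widetilde X_{\geq t})$ and is therefore orthogonal to $r^{(t-1)}$, so Pythagoras gives objective $= \tfrac12\|r^{(t-1)}\|^2 + \tfrac12\big\|(q_t - w^{(t-1)}_t)\widetilde X_t + \sum_{j\geq t+1}(v_j - w^{(t-1)}_j)\widetilde X_j\big\|^2$. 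Since the first term does not depend on $v$, minimizing over $(v_{t+1},\dots,v_N)$ is the same as minimizing the functional in \eqref{bidq_fast_rest_w}, but with $q_t$ and $w^{(t-1)}_{\geq t}$ in place of $\hat q_t$ and $\hat w^{(t-1)}_{\geq t}$.

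Now the induction. Base case $t=1$: $w^{(0)}_{\geq 1}=w=\hat w^{(0)}_{\geq 1}$ by initialization; \eqref{bidq_fast_first_q} is exactly \eqref{original_bidq_update_q} at $t=1$, so $q_1=\hat q_1$; and then \eqref{bidq_fast_first_w} is \eqref{original_bidq_update_w} at $t=1$, giving $w^{(1)}_{\geq 2}=\hat w^{(1)}_{\geq 2}$. Inductive step: assume the two procedures have produced identical iterates through $q_t$ and $w^{(t)}_{\geq t+1}$ for some $t\geq 1$. For the next correction (index $t+1\geq 2$), Fact 1 gives $q_{t+1}=\mathcal Q(w^{(t)}_{t+1})=\mathcal Q(\hat w^{(t)}_{t+1})=\hat q_{t+1}$, the definition \eqref{bidq_fast_rest_q}. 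For the next diffusion (again index $t+1\geq 2$), Fact 2 together with $q_{t+1}=\hat q_{t+1}$ and $w^{(t)}_{\geq t+1}=\hat w^{(t)}_{\geq t+1}$ makes the minimization defining $w^{(t+1)}_{\geq t+2}$ coincide with \eqref{bidq_fast_rest_w}, so $w^{(t+1)}_{\geq t+2}=\hat w^{(t+1)}_{\geq t+2}$. This advances the agreement and closes the induction, yielding $(q_t,w^{(t-1)}_{\geq t})=(\hat q_t,\hat w^{(t-1)}_{\geq t})$ for all $t$.

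I do not anticipate a genuine obstacle: the content is the one-line orthogonality observation, and everything afterward is bookkeeping with Pythagoras. Two points deserve care in the write-up. First, the step $t=1$ really is exceptional --- the pre-diffusion residual $r^{(0)}=Xw-\widetilde X w$ need not be orthogonal to $\text{col}(\widetilde X)$, so $q_1$ cannot be obtained by RTN and must be computed from the full problem \eqref{bidq_fast_first_q}, which is precisely why the theorem states \eqref{bidq_fast_first_q}--\eqref{bidq_fast_first_w} verbatim. Second, the least-squares minimizers appearing above and in \eqref{bidq:closed-form w} must be well defined: if $\widetilde X$ has full column rank --- which the efficient implementation already presumes, since it forms $H^{-1}=(\widetilde X^{\top}\widetilde X)^{-1}=LL^{\top}$ --- they are unique and there is nothing further to check; otherwise one fixes the minimum-norm (Moore--Penrose) solution on both sides, consistent with \eqref{bidq:closed-form w}, so that ``the minimizer'' is unambiguous. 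Under this convention the reduced diffusion update is exactly the form that \cref{thm:GPTQ1} then converts into the Cholesky-based update used in \cref{Qronos++}.
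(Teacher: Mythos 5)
Your proof is correct and follows essentially the same strategy as the paper's: induction on the step index, with the single driving observation that the diffusion step \eqref{original_bidq_update_w} forces the residual to be orthogonal to $\text{col}(\widetilde X_{\geq t+1})$, after which the Pythagorean theorem collapses the error-correction objective to round-to-nearest and splits the diffusion objective into a constant plus the reduced objective of \eqref{bidq_fast_rest_w}. Your packaging into ``Fact 1''/``Fact 2'' and the remark on well-definedness of the least-squares minimizer (full column rank vs.\ the Moore--Penrose convention) are expository refinements, not a different route.
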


Starting from the second iteration, \cref{thm:bidq_equivalence} shows that the updates in \Eqref{original_bidq_update_q} and \Eqref{original_bidq_update_w} can be equivalently reformulated as \Eqref{bidq_fast_rest_q} and \Eqref{bidq_fast_rest_w}, respectively.
This reformulation allows $q_t$ to be obtained via RTN for $t \geq 2$, followed by an adjustment of the remaining weights using only the (potentially quantized) activation matrix $\widetilde{X}$ to compensate for the one-step quantization error $(q_t - w^{(t-1)}_t)\widetilde{X}_t$.

To further accelerate this adjustment step, we now present \cref{thm:GPTQ1}, which establishes the equivalence of the update in \Eqref{bidq_fast_rest_w} (for $t \geq 2$) with a Cholesky-based least-squares solution\footnote{We do not claim that \cref{thm:GPTQ1} is novel, though we were unable to find it stated explicitly in the literature.}. For notational simplicity, we slightly abuse the indexing by treating $t = 2$ as a `restart.'

\begin{lemma}[Equivalence of Least-Squares Formulation and Cholesky Formulation]\label{thm:GPTQ1}
Assume that $H = X^\top X$ is invertible, and let $H^{-1} = LL^\top$ denote its Cholesky decomposition, with $L$ lower triangular.  Then, starting from $w^{(0)} = w$, the update rules
\begin{align}
    q_t &= \mathcal{Q}(w^{(t-1)}_t), \label{eq3} \\
    w^{(t)}_{\geq t+1} &= \underset{(v_{t+1}, \dots, v_N) \in \mathbb{R}^{N - t}}{\arg\min} \, \frac{1}{2} \| (q_t - w^{(t-1)}_t) X_t + \sum_{j = t+1}^{N} (v_j - w^{(t-1)}_j) X_j \|^2 \label{eq4}
\end{align}
are equivalent to the Cholesky-based iterations
\begin{align}
    q_t &= \mathcal{Q}(w^{(t-1)}_t), \label{eq1} \\
    w^{(t)}_{\geq t+1} &= w^{(t-1)}_{\geq t+1} + \Delta^{(t)}, \label{eq2}
\end{align}
where
\(
\Delta^{(t)} = - (w^{(t-1)}_t - q_t) \frac{L_{\geq t+1,\, t}}{L_{tt}} \in \mathbb{R}^{N - t}.
\)
\end{lemma}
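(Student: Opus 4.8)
The plan is to prove the two iterations are identical by induction on $t$, reducing everything to a single linear-algebra identity that compares one step of the least-squares update \eqref{eq4} with one step of the Cholesky update \eqref{eq2}. Since both iterations start from the common state $w^{(0)}=w$, and at each step $q_t=\mathcal{Q}(w^{(t-1)}_t)$ depends only on the shared current iterate, the inductive step amounts to showing that, given the same $w^{(t-1)}$, the rules \eqref{eq4} and \eqref{eq2} produce the same $w^{(t)}_{\geq t+1}$ (the already-quantized coordinates being untouched by either rule). So the whole content is one step.

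First I would record the consequences of $H=X^\top X$ being invertible: as $H\succeq 0$ and invertible, $H\succ 0$, so every trailing principal submatrix $H_{\geq t+1,\geq t+1}=X_{\geq t+1}^\top X_{\geq t+1}$ is positive definite (hence invertible) and $X_{\geq t+1}$ has full column rank; also $L$ has positive diagonal, so $L_{tt}\neq 0$. Consequently the minimizer in \eqref{eq4} is unique and given by the normal equations: writing $c=q_t-w^{(t-1)}_t$ and $v_{\geq t+1}=w^{(t-1)}_{\geq t+1}+\delta$, minimizing $\tfrac12\|cX_t+X_{\geq t+1}\delta\|^2$ yields $\delta=-c\,(H_{\geq t+1,\geq t+1})^{-1}H_{\geq t+1,t}$, i.e.
\[
w^{(t)}_{\geq t+1}-w^{(t-1)}_{\geq t+1}=(w^{(t-1)}_t-q_t)\,(H_{\geq t+1,\geq t+1})^{-1}H_{\geq t+1,t}.
\]

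The heart of the argument is then the identity
\[
(H_{\geq t+1,\geq t+1})^{-1}H_{\geq t+1,t}=-\,\frac{L_{\geq t+1,t}}{L_{tt}},
\]
which, substituted into the display above, turns the normal-equations update into exactly $w^{(t)}_{\geq t+1}=w^{(t-1)}_{\geq t+1}+\Delta^{(t)}$. To prove it, partition indices as $I=\{1,\dots,t\}$, $J=\{t+1,\dots,N\}$ and write $L$ in block lower-triangular form $L=\begin{bmatrix}A&0\\ B&C\end{bmatrix}$ with $A\in\mathbb{R}^{t\times t}$ lower triangular, so $A_{tt}=L_{tt}$ and $B_{:,t}=L_{\geq t+1,t}$. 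From $H^{-1}=LL^\top$ we read off $(H^{-1})_{II}=AA^\top$ and $(H^{-1})_{JI}=BA^\top$, hence $(H^{-1})_{JI}\,[(H^{-1})_{II}]^{-1}=BA^{-1}$. On the other hand, the Schur-complement formulas for the blocks of $H^{-1}$ give $(H^{-1})_{II}=(H/H_{JJ})^{-1}$ and $(H^{-1})_{JI}=-H_{JJ}^{-1}H_{JI}(H/H_{JJ})^{-1}$, with $H/H_{JJ}:=H_{II}-H_{IJ}H_{JJ}^{-1}H_{JI}$, so the same product equals $-H_{JJ}^{-1}H_{JI}$. Therefore $H_{JJ}^{-1}H_{JI}=-BA^{-1}$. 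Taking the column indexed by $t$ (the last column of $H_{JI}$, which is $H_{\geq t+1,t}$) and using that $A$ lower triangular forces $A^{-1}$ lower triangular with last column $e_t/A_{tt}$, we get $H_{JJ}^{-1}H_{\geq t+1,t}=-BA^{-1}e_t=-B_{:,t}/A_{tt}=-L_{\geq t+1,t}/L_{tt}$, as claimed.

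Finally I would assemble the induction: the base case $t=0$ is the shared initialization $w^{(0)}=w$; for the step, assuming both iterations share $w^{(t-1)}$, rules \eqref{eq3}/\eqref{eq1} give the same $q_t$, and by the normal-equations computation plus the identity above the least-squares update \eqref{eq4} produces precisely $w^{(t-1)}_{\geq t+1}+\Delta^{(t)}$, matching \eqref{eq2}; hence the full states agree at step $t$. I expect no genuine analytic difficulty here — the only thing requiring care is the index bookkeeping, namely keeping the three blocks $\{1,\dots,t-1\}$, $\{t\}$, $\{t+1,\dots,N\}$ straight and correctly matching the $t$-th column and $(t,t)$-entry of the block Cholesky factor to $L_{\geq t+1,t}$ and $L_{tt}$, together with noting that the full Cholesky factor $L$ of $H^{-1}$ (rather than a factor recomputed from $H_{\geq t+1,\geq t+1}$) already carries the needed information in its $t$-th column.
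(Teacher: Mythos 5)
Your proof is correct, and it takes a genuinely different route from the paper's. The paper reduces \eqref{eq4} to the quotient $[H^{-1}_{\geq t,\geq t}]_{\geq 2,1}/[H^{-1}_{\geq t,\geq t}]_{11}$, where $H^{-1}_{\geq t,\geq t}$ denotes $(X_{\geq t}^\top X_{\geq t})^{-1}$ (not a submatrix of $H^{-1}$), and then proves by induction on $t$ --- via a Schur-complement rank-one update together with the Sherman--Morrison formula --- that the Cholesky factor of $(X_{\geq t}^\top X_{\geq t})^{-1}$ is exactly the trailing submatrix $L_{\geq t,\geq t}$ of $L$; the desired ratio $L_{\geq t+1,t}/L_{tt}$ then drops out. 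You instead bypass the inductive Cholesky-of-a-trailing-submatrix lemma entirely: a single block partition of $L$ at index $t$ gives $(H^{-1})_{JI}[(H^{-1})_{II}]^{-1}=BA^{-1}$ directly from $H^{-1}=LL^\top$, the block-inverse (Schur complement) formula for $H$ shows this same product equals $-H_{JJ}^{-1}H_{JI}$, and extracting the $t$-th column via the last column $e_t/A_{tt}$ of the lower-triangular $A^{-1}$ yields $-L_{\geq t+1,t}/L_{tt}$ in one step. Your approach is shorter and avoids both the induction and the potentially confusing $H^{-1}_{\geq t,\geq t}$ notation; the paper's approach has the advantage of mirroring how the algorithm actually updates the inverse Hessian iteratively, and it produces the reusable intermediate fact that the trailing Cholesky factors nest, which is of independent interest for incremental implementations.
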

\SZnewnew{
\begin{remark}[\textbf{Memory Efficiency}]\label{remark: memory}
At the first iteration, both $q_1$ and $w^{(1)}_{\geq2}$ depend on $\widetilde{X}, X \in \mathbb{R}^{m \times N}$, requiring $\mathcal{O}(m N)$ peak memory, often where $m \gg N$. For example, Llama3.1-8B requires over 30 GB just to store  128 samples of 2048-token sequences at \texttt{float32}. We optimize this first iteration to use only square matrices such that
\begin{align}
    q_1 & = \mathcal{Q} \left( \frac{G_{1,\geq 1} w - H_{1, \geq 2} w^{(0)}_{\geq 2}}{H_{11}} \right) \label{memory_eff_q} , \\
   w^{(1)}_{\geq 2} & = (H_{\geq2, \geq2})^{-1} \left( G_{\geq2,\geq1}w - H_{\geq2,1}q_1 \right)\label{memory_eff_w} ,
\end{align}
where $G = \widetilde{X}^T X \in \mathbb{R}^{N\times N}$ and $H = \widetilde{X}^T \widetilde{X} \in \mathbb{R}^{N \times N}$; see \cref{prop:bidq_first_step} for a justification.
Note that calculating $G$ and $H$ does not require storing $\widetilde{X},X$, as one can sequentially accumulate the outer products of each of the $m$ samples.
Thus, this square matrix formulation reduces peak memory requirements of Qronos from $\mathcal{O}(mN)$ to $\mathcal{O}(N^2)$, yielding an \textbf{18$\times$} reduction in the case of Llama3.1-8B.
We note that \cite{colbert2024accumulator} similarly identify a memory optimization for GPFQ, but use singular value decompositions that may not scale well with $N$. 
\end{remark} }

\SZnew{\ICnew{This} completes our reduction of the original updates (Equations \ref{original_bidq_update_q} and \ref{original_bidq_update_w}) to the equivalent implementation given by \ICnew{Equations \ref{eq1}, \ref{eq2}, \ref{memory_eff_q}, and \ref{memory_eff_w}}. \HZnewnew{The pseudocode \SZnewnew{for this efficient version} is provided in Appendix~\ref{appen:pesudocode}.} \ICnew{We \HZnewnew{further} present a runtime analysis comparing this efficient version with \SZnewnew{the base version (\textit{i.e.}, a direct evaluation of the closed-form solution)} in \cref{sec:runtime_analysis}}.}

\subsection{Theoretical Interpretation and Intuition}\label{sec:theo inter and int}

\cref{thm:bidq_equivalence} and \cref{thm:GPTQ1} connect the initial disciplined optimization formulation of Qronos
% , which alternates between \ICnew{Equations \ref{original_bidq_update_q} and \ref{original_bidq_update_w}}, 
to our efficient implementation. These results guarantee that Qronos is \ICnew{both} interpretable and scalable, explicitly correcting error from the mismatched input pairs {$X$ and $\widetilde{X}$}. Here, we provide deeper intuition in the context of previous state-of-the-art rounding algorithms, namely GPFQ and OPTQ.

\SZnew{
% GPFQ is a data-driven error-correcting PTQ method \citep{lybrand2021greedy, zhang2023post, zhang2023spfq}. 
When quantizing $w$, \ICnew{GPFQ \citep{lybrand2021greedy, zhang2023post, zhang2023spfq}} interprets $Xw$ as the endpoint of the path $\sum_{j=1}^t w_j X_j$ for $t=1,...,N$, and handles mismatched inputs by aiming to match  $\sum_{j=1}^{t} w_j X_j $ and  $\sum_{j=1}^{t} q_j \widetilde{X}_j $ for all $t$. More precisely, $q_t$ is selected as $\arg\min_{p \in \mathcal{A}}\|\sum_{j=1}^{t} w_j X_j-\sum_{j=1}^{t-1} q_j \widetilde{X}_j-p\widetilde{X}_t\|^2
$.} %, as its name (\textbf{g}reedy \textbf{p}ath \textbf{f}ollowing \textbf{q}uantization) suggests. 

{Although path following handles the case when $X = \widetilde{X}$ well, additional considerations are required when $X \neq \widetilde{X}$ {since, in such a case,} the tails of the two paths generally do not align {when} $\sum_{i=t+1}^N w_i (X_i -\widetilde{X}_i) \neq 0$. Qronos handles this drawback by adopting a natural remedy to replace the unquantized weights \ICnew{$w_i$ by auxiliary weights $w^{(t)}_i$, for \(i \geq t+1\), }so that
\[\sum_{i=1}^t q_i \widetilde{X}_i \ + \sum_{i=t+1}^N w^{(t)}_i\widetilde{X}_i \ \approx \ Xw = \sum_{i=1}^N w_i X_i.\]}

{OPTQ \citep{frantar2022gptq} explores a similar weight update idea, but only in the case where $X=\widetilde{X}$, by modifying the remaining unquantized weights after $q_t$ is selected. The Cholesky reformulation used in \cref{thm:GPTQ1} also resembles the key mechanism in OPTQ. 
% (\cref{OPTQ_vanilla}, omitting lazy batch updates and dampening, for simplicity). 
In this way, the {runtime} of Qronos scales similarly to OPTQ while also explicitly addressing the mismatch between $X$ and $\widetilde{X}$; see \cref{sec:runtime_analysis} for details. This unexpected connection of {{Qronos}} to OPTQ also allows us to derive a novel interpretation of OPTQ\SZnew{, which we now present}.

\begin{corollary}\label{thm:GPTQ2}
\SZnew{The OPTQ iterations, when applied to a single layer input $X$, are equivalent to
\begin{align}
    q_t &= \underset{p \in \mathcal{A}}{\arg\min} \, \frac{1}{2} \| Xw - \sum_{j=1}^{t-1} q_j X_j - p X_t - \sum_{j = t+1}^{N} w^{(t-1)}_j X_j \|^2, \label{eq5} \\
    w^{(t)}_{\geq t+1} &= \underset{(v_{t+1}, \dots, v_N) \in \mathbb{R}^{N - t}}{\arg\min} \, \frac{1}{2} \| Xw - \sum_{j=1}^{t} q_j X_j - \sum_{j = t+1}^{N} v_j X_j \|^2, \label{eq6}
\end{align}
with $w^{(0)}_{\geq 1}=w$.}
\end{corollary}

\RSnew{In other words, the updated weights and quantized weights at every iteration $t$ that are produced by OPTQ are identical to those produced by Equations \ref{eq5} and \ref{eq6}. In particular, \Eqref{eq6} shows that, at each step the updated weights $w^{(t)}_{\geq t+1}$ indeed optimally correct for the errors produced by the hitherto quantized sequence $q_1,...,q_t$ via orthogonal projection onto $\text{col}({X}_{\geq t+1})$, \ICnew{as further discussed in \cref{sec:OPTQ new interpretation}}.}
%\end{corollary}

\ICnew{Noticeably, OPTQ {suffers} from a systematic bias when the activation mismatch is non-negligible as, \RSnew{unlike Qronos,}  it does not explicitly minimize the true discrepancy $\min_{q\in \mathcal{A}^N}\|Xw - \widetilde{X}q\|_2$. Consequently, as discussed in \cref{appendix:optq_vs_qronos}, Qronos consistently reduces the relative error (measured in $\ell_2$ norm) of block outputs \RSnew{ compared to} OPTQ, as illustrated in Figure~\ref{fig:optq_vs_qronos}.}}

% \ICnote{I think we should offer some insight as to why Qronos does better. I can see a reviewer saying "you mention optimiality for OPTQ, how can you be better than optimal?"}\HZnote{How about we say it is optimal to minimize $\|Xw-Xq\|$?} 

\begin{table*}[h]
\centering
\caption{\ICnewnew{\textbf{\SZnewnew{2-bit} weight-only quantization of Qwen3 instruction fine-tuned models.} We apply HIP (stage 1 in Figure~\ref{fig:quantization pipeline}) and compare different rounding methods (stage 2).}} %\RSnote{Need to refer to the tables in the text, and need to say how many bits were used for table 2.}}
\vspace{-0.3cm}
\resizebox{\textwidth}{!}
{\begin{tabular}{lcccccc!{\vrule width 0.5pt}cccccc}
\toprule
  & \multicolumn{6}{c}{\textbf{WikiText2} ($\downarrow$)} & \multicolumn{6}{c}{\textbf{0-shot} ($\uparrow$)} \\
 & 0.6B & 1.7B & 4B & 8B & 14B & 32B & 0.6B & 1.7B & 4B & 8B & 14B & 32B \\
 \midrule
BF16 & 18.6 & 15.2 & 12.2 & 8.6 & 7.6 & 6.8 & 51.1 & 61.4 & 68.9 & 72.4 & 75.4 & 77.2 \\
\midrule
RTN & 7e5 & 8e6 & 4e5 & 4e4 & 3e5 & 1e5 & 32.1 & 31.9 & 32.4 & 31.8 & 32.9 & 32.8 \\
 OPTQ & 1e2 & 60.0 & 22.8 & 14.7 & 14.9 & 12.8 & 32.0 & 32.8 & 37.4 & 41.4 & 42.5 & 47.0 \\
 GPFQ & 1e2 & 45.3 & 25.4 & 17.1 & 15.6 & 13.4 & 33.0 & 32.4 & 35.9 & 39.4 & 40.4 & 46.0 \\
 GPTAQ & 74.5 & 37.0 & 21.0 & 13.6 & 14.4 & 12.9 & 32.3 & 34.0 & 38.7 & 42.5 & 43.3 & 47.3 \\
 Qronos & \textbf{46.0} & \textbf{23.5} & \textbf{17.8} & \textbf{12.9} & \textbf{13.4} & \textbf{12.0} & \textbf{35.0} & \textbf{36.7} & \textbf{41.5} & \textbf{44.7} & \textbf{45.2} & \textbf{48.0} \\
\bottomrule
\end{tabular}}
\label{tbl:2_bit_qwen3}
\end{table*}

\begin{table*}[t]
\centering
\caption{\textbf{Weight-only quantization of Llama3 foundation models.} We individually apply various quantization transforms (stage 1 in \cref{fig:quantization pipeline}) to isolate the impact of different rounding functions (stage 2) when quantizing to 3 and 4 bits, respectively denoted W3 and W4.}
\vspace{-0.3cm}
\resizebox{\textwidth}{!}{\begin{tabular}{cl!{\vrule width 0.5pt}ccc!{\vrule width 0.5pt}ccc!{\vrule width 0.5pt}ccc!{\vrule width 0.5pt}ccc}
\toprule
&  & \multicolumn{6}{c|}{\textbf{W3}} &  \multicolumn{6}{c|}{\textbf{W4}} \\ %\cmidrule(l){3-10} 
 &  & \multicolumn{3}{c}{\textbf{WikiText2 ($\downarrow$)}} & \multicolumn{3}{c|}{\textbf{0-shot ($\uparrow$)}} & \multicolumn{3}{c}{\textbf{WikiText2 ($\downarrow$)}} & \multicolumn{3}{c}{\textbf{0-shot ($\uparrow$)}} \\ \midrule 
Stage 1 & Stage 2 & 1B & 3B & 8B & 1B & 3B & 8B & 1B & 3B & 8B & 1B & 3B & 8B \\ \midrule %\cmidrule(l){3-10} 
% - & \textbf{BF16} & 15.4 & 11.3 & 8.0 &  & 11.8 & 9.1 & 6.5 \\ \midrule
%\midrule
BF16 & - & 8.9 & 7.1 & 5.9 & 59.4 & 67.5 & 74.4 & 8.9 & 7.1 & 5.9 & 59.4 & 67.5 & 74.4 \\  
\midrule
\multirow{5}{*}{\textbf{None}} & RTN & 2e4 & 1e4 & 3e4 & 32.3 & 32.4 & 32.6 & 18.0 & 10.1 & 8.4 & 49.1 & 60.8 & 67.4 \\
 & OPTQ & 42.5 & 13.8 & 11.4 & 37.5 & 48.1 & 53.8 & 10.4 & 7.8 & 6.5 & 54.3 & 63.4 & 71.0 \\
 & GPFQ & 35.3 & 13.4 & 11.1 & 35.7 & 49.9 & 53.5 & 10.4 & 7.8 & 6.5 & 56.0 & \textbf{65.2} & 71.2 \\
 & GPTAQ & 28.4 & 12.6 & 10.3 & 39.3 & 49.6 & \textbf{57.1} & 10.3 & 7.8 & 6.5 & \textbf{56.3} & 63.3 & 71.0 \\
 & \BiDQ & \textbf{22.8} & \textbf{11.3} & \textbf{9.3} & \textbf{39.5} & \textbf{53.1} & {56.7} & \textbf{10.1} & \textbf{7.6} & \textbf{6.4} & {56.2} & 64.5 & \textbf{72.0}\\
  \midrule
 \multirow{5}{*}{\thead{\textbf{Smooth} \\ \textbf{Quant}}} & RTN & 6e3 & 9e3 & 5e4 & 32.7 & 32.9 & 31.4 & 15.2 & 9.6 & 8.1 & 51.4 & 61.5 & 67.5 \\
 & OPTQ & 29.6 & 13.6 & 12.6 & 37.0 & 46.9 & 47.3 & 10.4 & 7.9 & 6.6 & 56.2 & \textbf{65.3} & 70.2 \\
 & GPFQ & 30.1 & 14.7 & 12.9 & 36.5 & 44.8 & 45.4 & 10.8 & 7.9 & 6.7 & 53.9 & 64.4 & 69.9 \\
 & GPTAQ & 25.0 & 12.9 & 11.4 & 37.9 & 46.8 & 49.1 & 10.4 & 7.9 & 6.6 & 55.2 & 63.1 & \textbf{71.2} \\
 & \BiDQ & \textbf{19.1} & \textbf{11.6} & \textbf{10.3} & \textbf{40.7} & \textbf{50.6} & \textbf{50.5} & \textbf{10.3} & \textbf{7.8} & \textbf{6.5} & \textbf{56.7} & 64.8 & {70.2} \\
  \midrule
 % \multirow{4}{*}{\textbf{AWQ}} & RTN &  &  &   &  &  &  &  &  &  &  &  & \\
 % & OPTQ &  &  &  &  &  &  &  &  &  &  &  & \\
 % & GPFQ &  &  &  &  &  &  &  &  &  &  &  & \\
 % & \BiDQ &  &  &  &  &  &  &  &  &  &  &  & \\
 % \midrule
 \multirow{5}{*}{\textbf{MagR}} & RTN & 2e3 & 2e3 & 5e4 & 33.8 & 33.5 & 35.1 & 13.8 & 10.3 & 7.2 & 53.1 & 58.1 & 69.7 \\
 & OPTQ & 20.1 & 12.9 & 8.1 & 44.2 & 45.6 & 59.7 & 10.3 & \textbf{8.0} & 6.5 & \textbf{56.4} & 60.0 & 69.0 \\
 & GPFQ & 21.0 & 14.0 & 8.3 & 43.9 & 48.4 & \textbf{61.7} & 10.4 & \textbf{8.0} & 6.5 & 55.4 & \textbf{61.1} & 70.3 \\
 & GPTAQ & 18.0 & 12.4 & 8.0 & \textbf{46.8} & \textbf{51.2} & {60.7} & 10.3 & \textbf{8.0} & \textbf{6.4} & 56.2 & 60.0 & 70.3 \\
 & \BiDQ & \textbf{16.9} & \textbf{11.8} & \textbf{7.8} & {46.6} & \textbf{51.2} & 60.0 & \textbf{10.1} & \textbf{8.0} & \textbf{6.4} & 56.2 & \textbf{61.1} & \textbf{70.4} \\
\midrule
 \multirow{5}{*}{\textbf{HIP}} & RTN & 7e2 & 3e2 & 1e2 & 34.2 & 33.3 & 36.3 & 13.8 & 8.8 & 7.2 & 52.0 & 62.8 & 70.0 \\
 & OPTQ & 16.1 & 10.3 & 8.6 & 44.1 & 56.6 & 58.8 & 9.9 & 7.6 & 6.3 & 56.8 & \textbf{66.1} & {72.1} \\
 & GPFQ & 16.6 & 10.4 & 8.6 & 44.9 & 54.8 & 58.9 & 9.9 & 7.6 & 6.3 & 56.5 & 65.7 & 72.0 \\
 & GPTAQ & 14.7 & 9.9 & 8.3 & 46.5 & 56.9 & 59.3 & 9.8 & \textbf{7.5} & 6.3 & \textbf{57.8} & 66.0 & \textbf{72.4} \\
 & \BiDQ & \textbf{12.9} & \textbf{9.3} & \textbf{7.8} & \textbf{48.1} & \textbf{59.6} & \textbf{62.2} & \textbf{9.6} & \textbf{7.5} & \textbf{6.2} & {57.1} & 65.9 & 71.0\\
\bottomrule
\end{tabular}}
\label{tbl:preproc}
\end{table*}

\begin{table*}[t]
\centering
\caption{\textbf{Weight-activation quantization of Llama3 \SZnew{foundation} models.} We individually apply various transformations (stage 1) to isolate the impact of different rounding functions (stage 2).}
\vspace{-0.3cm}
\resizebox{\textwidth}{!}
{\begin{tabular}{cl!{\vrule width 0.5pt}ccc!{\vrule width 0.5pt}ccc!{\vrule width 0.5pt}ccc!{\vrule width 0.5pt}ccc}

\toprule
&  & \multicolumn{6}{c}{\textbf{W4A4KV16}} &  \multicolumn{6}{c}{\textbf{W4A4KV4}} \\ %\cmidrule(l){3-10}

 &  & \multicolumn{3}{c}{\textbf{WikiText2 ($\downarrow$)}} & \multicolumn{3}{c}{\textbf{0-shot ($\uparrow$)}} & \multicolumn{3}{c}{\textbf{WikiText2 ($\downarrow$)}} & \multicolumn{3}{c}{\textbf{0-shot ($\uparrow$)}} \\ \midrule 
 
Stage 1 & Stage 2 & 1B & 3B & 8B & 1B & 3B & 8B & 1B & 3B & 8B & 1B & 3B & 8B \\ \midrule %\cmidrule(l){3-10} 
% - & \textbf{BF16} & 15.4 & 11.3 & 8.0 &  & 11.8 & 9.1 & 6.5 \\ \midrule
BF16 & - & 8.9 & 7.1 & 5.9 & 59.4 & 67.5 & 74.4 & 8.9 & 7.1 & 5.9 & 59.4 & 67.5 & 74.4 \\  \midrule
 \multirow{5}{*}{\textbf{QuaRot}} & RTN & 22.0 & 12.6 & 9.6 & 45.4 & 55.0 & 62.6 & 41.8 & 22.0 & 15.9 & 41.5 & 49.8 & 57.4 \\ 
 & OPTQ & 14.3 & 9.8 & 8.0 &  50.4 & 59.9 & 66.7 & 19.8 & 14.3 & 10.3 & 45.8 & 56.2 & 64.1 \\
 & GPFQ & 13.6 & 9.3 & 7.6 & {50.9} & 60.9 & 67.6 & 22.0 & 14.7 & 11.4 & 43.3 & 53.9 & 59.8 \\
 & GPTAQ & 13.4 & 9.2 & \textbf{7.4} & \textbf{51.2} & 61.4 & 68.1 & 18.0 & 12.2 & \textbf{9.3} & 46.6 & \textbf{57.3} & \textbf{64.8} \\
 & \BiDQ &\textbf{13.2} & \textbf{9.1} &\textbf{7.4} & {50.9} & \textbf{61.5} & \textbf{68.9} & \textbf{17.8} & \textbf{11.6} & \textbf{9.3} & \textbf{47.8} & \textbf{57.3} & \textbf{64.8} \\
 \midrule

\multirow{5}{*}{\textbf{SmoothRot}} & RTN & 22.4 & 12.2 & 11.1 & 42.9 & 54.8 & 62.6 & 39.3 & 19.5 & 34.3 & 40.4 & 49.4 & 50.6 \\
 & OPTQ & 13.6 & 9.5 & 7.9 & 51.0 & 60.3 & 68.5 & 18.6 & 12.9 & 16.1 & 45.9 & 55.9 & 59.1 \\
 & GPFQ & 12.9 & \textbf{8.8} & 7.4 & 50.4 & \textbf{62.0} & 67.7 & 20.8 & 14.3 & 12.2 & 44.4 & 54.9 & 59.0 \\
 & GPTAQ & \textbf{12.6} & 8.9 & 7.3 & \textbf{51.1} & 61.4 & 68.8 & \textbf{16.6} & \textbf{11.6} & 10.8 & \textbf{48.6} & \textbf{57.8} & 63.7 \\
 & Qronos & \textbf{12.6} & \textbf{8.8} & \textbf{7.2} & {50.8} & {60.9} & \textbf{69.4} & {16.9} & \textbf{11.6} & \textbf{9.5} & {47.1} & \textbf{57.8} & \textbf{65.2} \\
\midrule
 
 \multirow{5}{*}{\textbf{SpinQuant}} & RTN & 20.5 & 12.6 & 9.3 & 47.7 & 57.5 & 64.2 & 33.5 & 20.2 & 13.4 & 43.1 & 52.2 & 60.8 \\
 & OPTQ & 13.4 & 9.2 & 7.7 & 52.0 & 61.1 & 67.0 & 17.9 & 15.0 & 8.9 & {47.9} & \textbf{58.5} & 65.5 \\
 & GPFQ & 13.5 & 9.2 & 7.5 & 51.2 & {61.2} & 67.0 & 21.1 & 14.3 & 10.9 & 45.3 & 53.6 & 60.9 \\
 & GPTAQ & 12.9 & 9.0 & 7.4 & 51.8 & 61.1 & 68.3 & 17.1 & {NaN} & \textbf{8.7} & \textbf{49.4} & {NaN} & 65.3 \\
 & \BiDQ & \textbf{12.3} & \textbf{8.7} & \textbf{7.2} & \textbf{52.8} & \textbf{62.1} & \textbf{68.4} & \textbf{16.4} & \textbf{11.1} & \textbf{8.7} & 48.2 & {58.2} & \textbf{65.8} \\
 % \midrule
 % \multirow{4}{*}{\textbf{PTX-1.1}} & RTN &  &  &  &  &  &  &  &  &  &  &  & \\
 % & OPTQ &  &  &  &  &  &  &  &  &  &  &  &\\
 % & GPFQ &  &  &  &  &  &  &  &  &  &  &  & \\
 % & \BiDQ &  &  &  &  &  &  &  &  &  &  &  &\\
\bottomrule
\end{tabular}}
\label{tbl:preproc_weight_activation}
\end{table*}

\section{Experiments}\label{sec:exp}

The core contribution of this work is Qronos---our principled data-driven rounding algorithm that alternates between (1) explicitly correcting quantization error due to both the weights and activations, and (2) diffusing excess error into future weights yet to be quantized. 
Thus, our primary comparison metric is preserving model quality in challenging quantization scenarios.
We design our experiments to isolate the impact of the rounding function (stage 2 in \cref{fig:quantization pipeline}), while varying the quantization transforms (stage 1 in \cref{fig:quantization pipeline}), \ICnew{as further discussed in Sections \ref{experiments:weight_only_quant} and \ref{exps:weight_act_quant}}. 

\textbf{Models \& Datasets.} \SZnewnew{We conduct experiments on Llama3 \citep{grattafiori2024llama} and Qwen3 \citep{yang2025qwen3} models using WikiText2 \citep{merity2016pointer} for evaluation. We use\ICnewnew{, without modification,} the implementations made publicly available via Huggingface \citep{wolf2020transformers}.
%We use the foundation model checkpoints for our main results and provide some results with instruction fine-tuned checkpoints in \cref{appendix:additional_models}.
We provide additional results in \cref{appendix:additional_models}.}
We use LightEval \citep{lighteval} to evaluate generalization via 5 zero-shot reasoning tasks: ARC (challenge and easy) \citep{clark2018think}, HellaSwag \citep{zellers2019hellaswag}, PIQA \citep{bisk2020piqa}, and Winogrande \citep{sakaguchi2021winogrande}, and report the normalized average accuracy.

\textbf{Setup.} We implement \BiDQ ~in PyTorch \citep{paszke2019pytorch} using the Brevitas quantization library \citep{brevitas}, and quantize all models using a single AMD MI210 GPU with 64 GB of memory. % \footnote{\ICnew{AMD, AMD Instinct, and combinations thereof are trademarks of Advanced Micro Devices, Inc.}}. 
Unless otherwise specified, we construct our calibration dataset using 128 random sequences of 2048 tokens sampled from the WikiText2 dataset for all data-driven PTQ algorithms. We compare \BiDQ ~against RTN and the unmodified Brevitas implementations of OPTQ and GPFQ, also leveraging the unmodified Brevitas implementations of the various quantization transforms. \ICnew{We provide quantization transform hyperparameter details in \cref{appendix:more_exp_details}, as well as ablation studies.}

\textbf{Baselines.} Our baselines are RTN, OPTQ, GPFQ and GPTAQ. For OPTQ, we use the standard dampened covariance matrix $\widetilde{H} = H + \lambda I$, where $\lambda$ is 1\% of the average diagonal of $H$. We similarly use a dampened covariance matrix for \BiDQ, but choose $\lambda$ to be based on the maximum singular value of $H$ such that $\lambda = \alpha \cdot \sigma_1$, which limits the condition number of $\widetilde{H}$ to be less than $\alpha^{-1}$. We select $\alpha=1e^{-6}$ for weight-only quantization and $\alpha=1e^{-3}$ for weight-activation quantization. Additionally, we apply GPFQ, GPTAQ, and \BiDQ~block-by-block; this corresponds to \SZnew{resetting} $\widetilde{X} = X$ at the beginning of each block. %Note that this corresponds precisely to the original OPTQ proposal \citep{frantar2022gptq}, and has become common practice for GPFQ \citep{colbert2024accumulator}.\SZnote{In OPTQ, there is no error correction so I guess there is no block by block? Maybe we can make it more clear on what block by block means for GPFQ and Qronos} 
Finally, \ICnew{we quantize weights in} descending order of the diagonals of $H$, as is now common practice~\citep{ist2022gptq, brevitas}.

\subsection{Weight-Only Quantization}
\label{experiments:weight_only_quant}

We first present state-of-the-art 2-bit and 1.58-bit results for weight-only PTQ on Llama3 \SZnewnew{foundation models}, controlling for the quantization transform and grid selection while varying the rounding function.
We quantize weights using the standard asymmetric weight quantizer~\citep{frantar2022gptq}, where scaling factor~$s$ and zero point~$z$ are defined per-channel on a scaled min-max grid such that $s = \beta \cdot (\max(w) - \min(w) ) / (2^b - 1)$ and $z = \beta \cdot \min(w) / s$. Following the analysis of \cite{zhang2024magr}, we choose $\beta=0.8$ when quantizing to 2 bits or fewer. We combine Hadamard-based incoherence processing (HIP) \citep{quip2, quarot} with weight magnitude reduction (MagR) \citep{zhang2024magr} to jointly act as our quantization transform, as they are both known to be effective at few-bit weight quantization \citep{quip, adepu2024framequant}. We present our results in \cref{tbl:2b_and_below}, as well as the BF16 baselines, and highlight that \BiDQ ~consistently outperforms existing rounding methods. For example, when compared to OPTQ, \BiDQ~provides a 1.4$\times$ reduction in WikiText2 perplexity and $+$3.3\% increase in average zero-shot accuracy for Llama3.2-1B at 2 bits, and a massive improvement in perplexity (4.9$\times$) at 1.58 bits. We provide additional 2-bit and 1.58-bit results with $\beta=1$ in \cref{sec:ablation_2_bit}. 

\SZnewnew{Next, we present state-of-the-art 2-bit weight-only PTQ results on Qwen3 instruction fine-tuned models. Here, we use HIP as our quantization transform then tune the grid to minimize the mean squared error loss between the transformed weights and their RTN-quantized counterparts via a linear search over \(s\) and \(z\). Table~\ref{tbl:2_bit_qwen3} provides the results from Qwen3 0.6B to 32B. Qronos again yields clear and consistent improvements for all models in this family.}

\SZnewnew{Finally}, we present 3-bit and 4-bit weight-only PTQ results (denoted W3 and W4, respectively) on Llama3 \SZnewnew{foundation models} while independently demonstrating compatibility with 3 notable quantization transforms: SmoothQuant \citep{smoothquant}, MagR, and HIP.
Table~\ref{tbl:preproc} shows the results across three models in the Llama3 family. For both W3 and W4, we use $\beta=1$. \BiDQ ~consistently provides higher quality quantized models than RTN, OPTQ, GPFQ and GPTAQ, as measured in both WikiText2 perplexity and average zero-shot accuracy. Consistent with emerging work on rotation-based quantization transforms \citep{quip, quip2}, incoherence processing outperforms other transforms, with HIP + \BiDQ ~providing the best overall results. Note that HIP + OPTQ is similar in spirit to QuIP by Theorem 6 in \citep{quip}, which equates LDLQ to OPTQ, with a notable difference that QuIP proposed random orthogonal matrices instead of Hadamard matrices.

\subsection{Weight-Activation Quantization}
\label{exps:weight_act_quant}
% \ICnote{Double check that we get all of these results into the tables.}
We present 4-bit weight-activation quantization results with and without 4-bit KV cache quantization (denoted W4A4KV16 and W4A4KV4, respectively) while demonstrating compatibility with QuaRot \citep{quarot}, \ICnewnew{SmoothRot \citep{czako2025smoothrot},} and SpinQuant \citep{spinquant}.
We quantize weights using the standard symmetric weight quantizer with per-channel scaling factors optimized via linear search over the mean square error loss between the full-precision and quantized weights.
We quantize activations using the standard asymmetric activation quantizer with dynamic per-token scaling factors and zero points defined on the min-max grid, as is common practice \citep{spinquant}.
\ICnew{When quantizing KV caches, we similarly \ICnewnew{use} per-token scaling and zero points.}
% When learning the rotations for SpinQuant, we do so via 100 iterations of Cayley SGD \citep{li2020efficient} using \ICnew{800} random samples from the C4 dataset \citep{raffel2020exploring}, as further discussed in \cref{appendix:more_exp_details}.

Table \ref{tbl:preproc_weight_activation} shows the results across three \SZnewnew{foundation} models in the Llama3 family. \BiDQ ~again consistently outperforms RTN, OPTQ, GPFQ and GPTAQ\footnote{\ICnewnew{We observed instability with GPTAQ, as reflected by the NaN entries in \cref{tbl:preproc_weight_activation}, and similar issues have been reported by others attempting to reproduce results from \cite{li2025gptaq} with their official repository.}} as measured in both WikiText2 perplexity and average zero-shot accuracy.
Consistent with emerging work on learned rotations \citep{spinquant, hu2025ostquant, franco2025improving}, SpinQuant outperforms QuaRot \ICnewnew{and SmoothRot}, with SpinQuant + \BiDQ ~providing the best overall results with and without KV cache quantization.
We remark that our experiments use per-token quantization for both the activations and KV caches, while \cite{quarot} and \cite{spinquant} both use per-group scaling for KV cache quantization.

\ICnewnew{Our experimental analysis reveals an important pattern: Qronos provides larger improvements as quantization tasks become more challenging. Specifically, Qronos demonstrates larger relative improvements over existing methods when transitioning from weight-only to weight-activation quantization (\textit{i.e.,} W4 versus W4A4), and even more substantial gains when incorporating KV cache quantization (\textit{i.e.,} W4A4 versus W4A4KV4). We further validate this pattern with additional W3A3 results in \cref{appendix:additional_models} (\cref{tbl:w3a3_llama3}), which show larger improvements than both W4A4 and W3 weight-only quantization. These findings suggest that Qronos is particularly effective in scenarios where multiple sources of quantization error interact, making it especially valuable for aggressive quantization settings where traditional methods struggle to maintain model quality.}

\begin{figure}[t]
    \centering
    \subfloat[Runtime of Rounding Algorithm]
    {\includegraphics[width=0.4\linewidth]{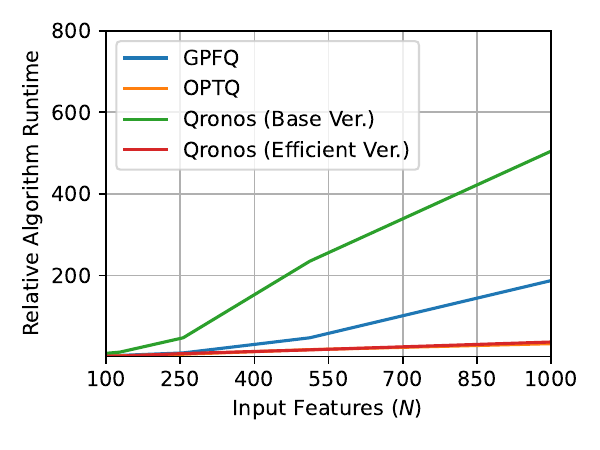}}
    \subfloat[Runtime of Quantization Pipeline]
    {\includegraphics[width=0.4\linewidth]{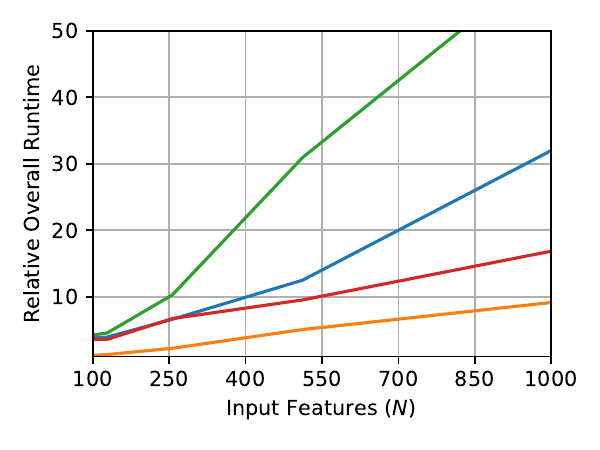}}
    \caption{We compare the runtime of (a) the rounding algorithm and (b) the overall quantization pipeline as we scale the input features $N$, as measured on an AMD MI210. We average all measurements over 3 seeds and normalize to the runtime of OPTQ where $N=32$.}
    \label{fig:runtime_analysis}
\end{figure}

\begin{table*}[t]
\centering
\caption{\ICnewnew{\textbf{Calibration Runtime Analysis.} We report the end-to-end calibration time of OPTQ and Qronos for the Qwen3 model family, normalized to Qwen3-0.6B, as measured on an AMD MI325X.}}
\vspace{-0.3cm}
\resizebox{0.7\textwidth}{!}
{\begin{tabular}{lcccccc}
\toprule
 & 0.6B & 1.7B & 4B & 8B & 14B & 32B \\  
\midrule
OPTQ & 1.0 & 1.4 & 2.8 & 3.7 & 5.6 & 10.6 \\
Qronos & 1.2 & 1.6 & 3.1 & 4.0 & 6.1 & 11.5 \\
\midrule
\textbf{Overhead} & 19.7\% & 16.2\% & 11.1\% & 9.0\% & 8.7\% & 8.7\% \\
\bottomrule
\end{tabular}}
\label{tbl:qwen3_runtime}
\end{table*}

\subsection{Hardware Efficiency and Runtime Analysis}
\label{sec:runtime_analysis}

\ICnew{The hardware efficiency benefits of quantization (\textit{i.e.}, improved throughput, memory, power, and area) are well-established \citep{Jacob_2018_CVPR, colbert2024accumulator}. 
Since \BiDQ~and other rounding algorithms leave the compute graph unaltered, they capture these benefits without introducing inference overhead beyond the quantization transform. Prior works have already profiled inference speedups and overheads; for example, \cite{quarot} report up to 2.16× speedup for W4A4 Llama2 7B over FP16, with Hadamard transforms adding at most 7\% overhead.
Therefore, we focus our runtime analysis on the quantization pipeline itself.}

\ICnewnew{\textbf{Microbenchmark.}} We perform our \ICnewnew{initial} runtime analysis using a single linear layer. 
\ICnewnew{We use a calibration set of $m=$10,000 random data sampled from a $K$-dimensional Gaussian distribution.}
The linear layer has $K \in [ 32, 1024 ]$ inputs with $K/4$ outputs.
\cref{fig:runtime_analysis} shows how the runtime of OPTQ, GPFQ, and Qronos scale with $K$\ICnewnew{, where (a) isolates the algorithm runtime (i.e., without the added inference cost of calculating \( H \) and \( G \)) and (b) aggregates the end-to-end runtime of calibration.}
To highlight the benefits of our equivalent formulation, we implement a base version of \BiDQ~that uses the iterates for $q_t$ and $w_{\geq t + 1}^{(t)}$ from Equations \ref{bidq:closed-form q} and \ref{bidq:closed-form w}.
Note that via Theorem \ref{thm:bidq_equivalence} and Lemma \ref{thm:GPTQ1}, we significantly improve the runtime scaling of Qronos over the base version, with a \textbf{13.8$\times$} reduction in algorithm runtime and a \textbf{3.6$\times$} reduction in overall runtime when $K=1024$.

\ICnewnew{\textbf{Calibration Runtime.}
\ICnewnew{Compared with OPTQ, which only needs to collect~$X$, GPFQ and \BiDQ ~require collecting both $\widetilde{X}$ and $X$ at each layer, which requires two forward passes (with and without quantization) and increases the overall quantization pipeline runtime.}
To evaluate the overhead of two forward passes in practice, we compare the calibration runtime of OPTQ and Qronos when quantizing the Qwen3 model family.
\Cref{tbl:qwen3_runtime} provides the runtimes for each model from 0.6B to 32B, normalized to the calibration runtime when using OPTQ to quantize Qwen3-0.6B on an AMD MI325X.
We observe the overhead of Qronos decreases from 19.7\% to 8.7\% as model size increases from 0.6B to 32B, indicating that algorithmic cost dominates the cost of executing inference twice and underscoring the importance of \cref{thm:bidq_equivalence}.}

\section{Conclusions}\label{sec:conclusion}

We introduce Qronos---a new \ICnew{backpropagation-free} rounding algorithm that alternates between correcting quantization error in both the weights and activations of previous layers and diffusing error into future weights within the current layer. Qronos is based on an interpretable and disciplined optimization framework, and it demonstrably surpasses existing data-driven approaches. Our implementation exploits several optimizations that together yield orders of magnitude improvements in memory and compute efficiency. Our experiments isolate the impact of the rounding function in the quantization pipeline while varying transformations on a \ICnew{scaled} min-max grid. Our results show that Qronos consistently offers improvements over previous state-of-the-art methods when quantizing weights, activations, and/or KV caches to 4 bits or fewer. That said, our results are intentionally limited to the \ICnew{scaled} min-max quantization grid to focus our experiments on transformations and rounding; we believe our results could be further improved by leveraging weight and activation distributions to design quantization grids that are more effective than the \ICnew{scaled} min-max grid used in this work, possibly with non-uniform grids via vector quantization.

\section*{Reproducibility Statement}

\ICnew{We integrate \BiDQ~into the open source Brevitas quantization library. Our code and instructions on how to reproduce the various experiments, including hyperparameters and random seeds, can be found \href{https://github.com/i-colbert/brevitas/tree/qronos/src/brevitas_examples/llm}{here} on GitHub. The code points to all datasets and models used for calibration and evaluation.}

\section*{Acknowledgment}
We gratefully acknowledge partial support by National Science Foundation, via the DMS-2410717 grant. We also gratefully acknowledge partial support from AMD. From AMD, we would like to thank Nick Fraser and Giuseppe Franco for their constructive reviews and feedback that made this paper better, and Gabor Sines, Michaela Blott, Syed Naim, Yonas Bedasso, and Max Kiehn for their support.

%\section*{Acknowledgment}
\bibliographystyle{abbrvnat}
\bibliography{citations}

%%%%%%%%%%%%%%%%%%%%%%%%%%%%%%%%%%%%%%%%%%%%%%%%%%%%%%%%%%%%/
\newpage
\appendix

\section{\ICnewnew{Pseudocode of Qronos}}\label{appen:pesudocode}

\ICnewnew{We provide the pseudocode for our efficient version of Qronos \HZnewnew{derived in Section~\ref{sec:kronos}}.}

\begin{algorithm}[h!]
\caption{\ICnewnew{Qronos (Efficient Version)}}
\label{Qronos++}
\begin{algorithmic}
\State \HZnewnew{$H = \widetilde{X}^\top \widetilde{X}$}, \HZnewnew{$G=\widetilde{X}^\top X$} 
\State \SZnewnew{$H^{-1} = (\widetilde{X}^\top \widetilde{X})^{-1} = LL^\top $ \Comment{Cholesky Decomposition}}
\For{\ICnewnew{every $w$ in $W$ (in parallel)}}%$j = 1$ to $N$} 
\State \ICnewnew{ \( q = \mathbf{0}^{N} \)}
\State \ICnewnew{$w^{(0)}\leftarrow \text{copy}(w)$}
\State %$q_{1}=\mathcal{Q}\left([\widetilde{X}_1^{\top} (Xw - \widetilde{X}_{\geq2}\widetilde{w}_{\geq 2})]/\|\widetilde{X}_1\|^2\right)$
% $q_{1}=\mathcal{Q}\left(\dfrac{\widetilde{X}_1^{\top} (Xw - \widetilde{X}_{\geq2}w^{(0)}_{\geq 2})}{\|\widetilde{X}_1\|^2}\right)$
\HZnewnew{$q_1  = \mathcal{Q} \left( \dfrac{G_{1,\geq 1} w - H_{1, \geq 2} w^{(0)}_{\geq 2}}{H_{11}} \right)$} \SZnewnew{\Comment{By \cref{prop:bidq_first_step}}}
%\Comment{Calc. $q_{1}$ from \eqref{bidq_fast_first_q}, use \eqref{memory_eff_q} for a memory efficient ver.}
%\HZnewnew{\State $A = (H^{-1})_{\geq 1,\geq 1} - \dfrac{1}{(H^{-1})_{1,1}}\left((H^{-1})_{\geq 2, 1}(H^{-1})_{1, \geq 2}\right)$}
%\Comment{Calc. $(H_{\geq 2,\geq 2})^{-1}$ using the Sherman–Morrison–Woodbury formula.}}
    % \State $w^{(1)}_{\geq 2}=\widetilde{X}_{\geq2}^{\dagger}\left(Xw - \widetilde{X}_{1}q_{1}\right)$
    \HZnewnew{\State $w^{(1)}_{\geq 2}=L_{\geq 2,\geq 2}L_{\geq 2,\geq 2}^\top\left(G_{\geq 2,\geq 1}w - H_{\geq 2, 1}q_{1}\right)$} \SZnewnew{\Comment{By \cref{lemma:cholesky}}}~\\
%\Comment{Calc. $w^{(1)}_{\geq 2}$ from \eqref{bidq_fast_first_w}, see \eqref{memory_eff_w} for a memory efficient ver.}
%\HZnewnew{\State Delete $H, H^{-1}, G, A$ from memory.}

\For{\ICnewnew{$t = 2$ to $N$} }
\SZnewnew{\Comment{By \cref{thm:bidq_equivalence} and \cref{thm:GPTQ1}}}
    \State \ICnewnew{$q_{t}= \Q(w^{(t-1)}_{t})$ }
    \State \ICnewnew{$ w^{(t)}_{\geq t+1} = w^{(t-1)}_{\geq t+1} + \Delta^{(t)} $}
    \State \ICnewnew{$\Delta^{(t)} = - (w^{(t-1)}_t - q_t) \frac{L_{\geq t+1,\, t}}{L_{tt}}$}
%\EndIf
\EndFor
\EndFor
\State \Return \ICnewnew{ \( Q \)}
\end{algorithmic}
\end{algorithm}

\section{Results on Additional Models}
\label{appendix:additional_models}

\SZnewnew{Our main results evaluate Llama3 foundation models and Qwen3 instruction fine-tuned  models. Here, we demonstrate that \BiDQ~maintains the quality of Llama3 instruction fine-tuned models and Qwen3 foundation models as well. We again compare against RTN, OPTQ, GPFQ, and GPTAQ.}

\ICnew{We present weight-only PTQ results with Llama3 instruction fine-tuned models at 3 and 4 bits in \cref{tbl:preproc_caht_models}. As in \cref{experiments:weight_only_quant}, we asymmetrically quantize weights to the scaled min-max grid with $\beta=1$ for both W3 and W4. We focus our instruction fine-tuned results on evaluating each rounding algorithm with and without Hadamard-based incoherence processing (HIP) as the quantization transform. As in \cref{experiments:weight_only_quant}, we find that HIP + \BiDQ~ consistently provides the highest quality quantized models relative to BF16 counterparts, as measured in both WikiText2 perplexity and zero-shot accuracy.}

We then present weight-only PTQ results with Qwen3 foundation models in \cref{tbl:qwen3}. We asymmetrically quantize weights to the scaled min-max grid with $\beta=0.9$ for W3. We focus these results with and without Hadamard-based incoherence processing (HIP). Again, we find HIP + Qronos consistently yields the highest quality quantized models relative to the BF16 counterparts.

%\ICnew{Finally, we present 4-bit weight-activation quantization results with and without 4-bit KV cache quantization in \cref{tbl:preproc_weight_activation_chat_models}. We use the same quantization configuration described in \cref{exps:weight_act_quant}, following \citep{quarot, spinquant}. As in \cref{exps:weight_act_quant}, we find that SpinQuant outperforms QuaRot, with SpinQuant + \BiDQ~consistently providing the best overall results.}

\begin{table*}[h]
\centering
\caption{\textbf{Weight-only quantization of instruction fine-tuned Llama3 models.} We apply Hadamard-based incoherence processing (HIP) as our quantization transform (stage 1 in \cref{fig:quantization pipeline}) to isolate the impact of different rounding functions (stage 2) when quantizing to 3 and 4 bits, respectively denoted W3 and W4. We also evaluate no quantization transform (\textit{i.e.}, ``None").}
\resizebox{\textwidth}{!}{\begin{tabular}{cl!{\vrule width 0.5pt}ccc!{\vrule width 0.5pt}ccc!{\vrule width 0.5pt}ccc!{\vrule width 0.5pt}ccc}
\toprule
&  & \multicolumn{6}{c|}{\textbf{W3}} &  \multicolumn{6}{c|}{\textbf{W4}} \\ %\cmidrule(l){3-10} 
 &  & \multicolumn{3}{c}{\textbf{WikiText2 ($\downarrow$)}} & \multicolumn{3}{c|}{\textbf{0-shot ($\uparrow$)}} & \multicolumn{3}{c}{\textbf{WikiText2 ($\downarrow$)}} & \multicolumn{3}{c}{\textbf{0-shot ($\uparrow$)}} \\ \midrule 
Stage 1 & Stage 2 & 1B & 3B & 8B & 1B & 3B & 8B & 1B & 3B & 8B & 1B & 3B & 8B \\ \midrule %\cmidrule(l){3-10} 
% - & \textbf{BF16} & 15.4 & 11.3 & 8.0 &  & 11.8 & 9.1 & 6.5 \\ \midrule
%\midrule
BF16 & - & 12.0 & 9.2 & 6.7 & 59.5 & 66.4 & 74.1 & 12.0 & 9.2 & 6.7 & 59.5 & 66.4 & 74.1 \\ 
\midrule
\multirow{4}{*}{\textbf{None}} & RTN & 2e4 & 4e3 & 3e4 & 32.6 & 33.0 & 32.2 & 21.4 & 12.6 & 9.1 & 51.0 & 62.3 & 67.6\\
 & OPTQ & 60.0 & 16.1 & 12.2 & 37.4 & 49.9 & 58.2 & 14.3 & 9.9 & 7.3 & 54.5 & 63.6 & 71.8 \\
 & GPFQ & 2e2 & 16.6 & 12.9 & 33.8 & 50.8 & 55.3 & 15.4 & 9.9 & 7.3 & 53.3 & 64.4 & 71.5 \\
 & GPTAQ & 52.0 & 14.9 & 11.4 & 37.4 & 49.8 & 57.5 & \textbf{13.8} & 9.9 & 7.3 & \textbf{55.5} & 63.1 & 71.2 \\
 & \BiDQ & \textbf{43.8} & \textbf{14.3} & \textbf{10.6} & \textbf{37.5} & \textbf{52.1} & \textbf{60.6} & \textbf{13.8} & \textbf{9.8} & \textbf{7.2} & \textbf{55.5} & \textbf{64.8} & \textbf{72.2} \\
  \midrule
 \multirow{4}{*}{\textbf{HIP}} & RTN & 1e3 & 3e2 & 1e2 & 33.4 & 35.0 & 36.9 & 16.6 & 10.8 & 8.0 & 54.6 & 63.6 & 70.8 \\
 & OPTQ & 19.1 & 12.8 & 9.3 & 48.0 & 58.2 & 59.0 & 13.2 & \textbf{9.6} & \textbf{7.1} & 56.6 & 64.5 & {72.1} \\
 & GPFQ & 20.4 & 12.8 & 9.6 & 47.6 & 57.1 & 61.1 & 13.2 & 9.8 & 7.2 & 57.0 & \textbf{65.3} & 71.9 \\
 & GPTAQ & 18.0 & 12.2 & 9.1 & 49.2 & 57.4 & 63.2 & 12.9 & 9.8 & \textbf{7.1} & 56.9 & 63.9 & \textbf{72.7} \\
 & \BiDQ & \textbf{16.6} & \textbf{11.6} & \textbf{8.8} & \textbf{49.9} & \textbf{58.4} & \textbf{64.1} & \textbf{12.8} & \textbf{9.6} & \textbf{7.1} & \textbf{57.6} & 64.8 & {72.1}\\
\bottomrule
\end{tabular}}
\label{tbl:preproc_caht_models}
\end{table*}

\begin{table*}[h]
\centering
\caption{\textbf{Weight-only quantization of Qwen3 \SZnewnew{foundation} models to 3 bits with $\beta=0.9$.} We apply Hadamard-based incoherence processing (HIP) as our quantization transform (stage 1 in \cref{fig:quantization pipeline}) to isolate the impact of different rounding functions (stage 2) when quantizing to 3 bits.}
\resizebox{0.7\textwidth}{!}
{\begin{tabular}{clccc!{\vrule width 0.5pt}ccc}
\toprule
 &  & \multicolumn{3}{c}{\textbf{WikiText2} ($\downarrow$)} & \multicolumn{3}{c}{\textbf{0-shot} ($\uparrow$)} \\
Stage 1 & Stage 2  & 1.7B & 4B & 8B & 1.7B & 4B & 8B \\ %\cmidrule(l){3-10}
 \midrule
BF16 & - & 8.6 & 7.3 & 6.5 & 63.9 & 70.1 & 73.6 \\  
\midrule
\multirow{5}{*}{None} & RTN & 3e5 & 82.0 & 3e3 & 32.9 & 45.3 & 37.1 \\
& OPTQ  & 37.5 & 10.4 & 8.8 & 35.7 & 57.2 & \textbf{61.7} \\
& GPFQ  & 1e2 & 10.8 & 9.3 & 33.0 & 56.3 & 58.9 \\
& GPTAQ & 33.8 & 10.1 & 8.5 & \textbf{36.1} & \textbf{63.7} & 58.5 \\
& \BiDQ & \textbf{33.0} & \textbf{9.5} & \textbf{8.3} & 36.0 & 59.9 & 61.5 \\
\midrule
\multirow{5}{*}{HIP} & RTN & 1e3 & 26.3 & 30.1 & 35.1 & 50.8 & 50.3 \\
& OPTQ & 10.8 & 8.8 & 7.6 & 54.4 & \textbf{64.4} & 67.6 \\
& GPFQ & 11.4 & 9.1 & 7.9 & 52.7 & 61.6 & 62.2 \\
& GPTAQ & 10.6 & 8.6 & 7.5 & 54.9 & 63.6 & 66.0 \\
& \BiDQ & \textbf{10.1} & \textbf{8.4} & \textbf{7.4} & \textbf{57.2} & 63.5 & \textbf{68.0} \\
\bottomrule
\end{tabular}}
\label{tbl:qwen3}
\end{table*}

\begin{table*}[h]
\centering
\caption{\ICnewnew{\textbf{3-bit weight-activation (W3A3) quantization of Llama3 foundation models.} We apply QuaRot as quantization transformation (stage 1) and compare different rounding functions (stage 2).}}
\vspace{-0.3cm}
\resizebox{0.6\textwidth}{!}
{\begin{tabular}{lccc!{\vrule width 0.5pt}ccc}
\toprule
  & \multicolumn{3}{c}{\textbf{WikiText2} ($\downarrow$)} & \multicolumn{3}{c}{\textbf{0-shot} ($\uparrow$)} \\
 & 1B & 3B & 8B & 1B & 3B & 8B \\
 \midrule
RTN & 2e3 & 9e2 & 1e3 & 33.0 & 32.3 & 32.8 \\
OPTQ & 9e2 & 2e2 & 1e2 & 32.3 & 33.2 & 35.9 \\
GPFQ & 60.0 & 30.1 & 27.9 & 35.6 & 39.2 & 40.3 \\
GPTAQ & 2e2 & 40.5 & 46.0 & 35.0 & 36.9 & 41.9 \\
Qronos & \textbf{46.8} & \textbf{22.0} & \textbf{20.4} & \textbf{37.0} & \textbf{43.4} & \textbf{47.4} \\
\bottomrule
\end{tabular}}
\label{tbl:w3a3_llama3}
\end{table*}

\section{Experiment Details for Quantization Transforms}
\label{appendix:more_exp_details}

\ICnew{All experiments use WikiText2 as the calibration set, aside from SpinQuant, which uses C4. To pre-process our calibration dataset, we ensure that the \texttt{<bos>} token always appears as the first token in an input sequence as the recent study by \cite{barbero2025llms} suggests removing \texttt{<bos>} during inference may greatly reduce performance if models were trained with \texttt{<bos>} always appearing at the first token; their analysis suggests the Llama3 family of models fits this category.}
\ICnew{Thus, to quantize our models, we first load the pre-trained checkpoint, then pre-process the dataset(s), then apply the quantization pipeline visualized in \cref{fig:quantization pipeline}.
For \cref{experiments:weight_only_quant}, we intentionally select SmoothQuant \citep{smoothquant}, Hadamard-based incoherence processing (HIP) \citep{quarot, quip2}, and MagR \citep{zhang2024magr} as they perform fundamentally different transformations.
For \cref{exps:weight_act_quant}, we study QuaRot, \ICnewnew{SmoothRot}, and SpinQuant.
Here, we describe hyperparameters for the data-driven transforms---SmoothQuant, MagR, SpinQuant, \ICnewnew{and SmoothRot}.}

\ICnew{\textbf{SmoothQuant.} When applying SmoothQuant, we do so before quantizing weights or activations. In practice, SmoothQuant requires the selection of a hyperparameter to control the scaling optimization criteria. We refer to the SmoothQuant hyperparameter as $\gamma$ so as to not clash with our use of $\alpha$ in \cref{sec:exp}; note that $\gamma \in [ 0, 1 ]$. In \cref{tbl:smoothquant_alpha}, we provide the results of a uniform grid search over $\gamma$ when quantizing Llama3.2-1B-Instruct to 4 bits using round-to-nearest (RTN). These results motivate our decision to use $\gamma=0.3$ in all our weight-only PTQ experiments that apply SmoothQuant. }

\begin{table*}[h]
\centering
\caption{\textbf{Impact of SmoothQuant's $\gamma$ on Llama3.2-1B-Instruct.} We evaluate the impact of the smoothing parameter $\gamma$ on both WikiText2 perplexity and normalized average zero-shot accuracy when quantizing Llama3.2-1B-Instruct to 4 bits using round-to-nearest (RTN).}
\vspace{-0.3cm}
\resizebox{0.7\textwidth}{!}
{\begin{tabular}{cccccccc}
\toprule
$\gamma$ & 0.2 & 0.3 & 0.4 & 0.5 & 0.6 & 0.7 & 0.8 \\  
\midrule
\textbf{WikiText2 ($\downarrow$)} & 24.6 & \textbf{18.6} & 18.9 & 21.4 & 87.0 & 4e2 & 3e4 \\
\textbf{0-shot ($\uparrow$)} & 50.8 & \textbf{53.3} & 52.8 & 52.5 & 42.8 & 36.6 & 32.3 \\
\bottomrule
\end{tabular}}
\label{tbl:smoothquant_alpha}
\end{table*}

\ICnew{\textbf{MagR.} When applying MagR, we also do so before quantizing weights and activations. When coupled with HIP, we do so after inserting rotations into the compute graph. In practice, MagR requires tuning the $\ell_\infty$ penalty; we refer to this hyperparameter as $\theta$, again so as to not clash with our use of $\alpha$ in \cref{sec:exp}. \cite{zhang2024magr} tune $\theta$ to Llama2 models, settling on $\theta=0.001$ for their experiments. In \cref{tbl:magr_alpha}, we provide new results for Llama3.2-1B-Instruct. These results motivate our decision to use $\theta=0.01$ in all our weight-only PTQ experiments that apply MagR.}

\begin{table*}[h]
\centering
\caption{\textbf{Impact of MagR's $\theta$ on Llama3.2-1B-Instruct.} We evaluate the impact of the penalty parameter $\theta$ on both WikiText2 perplexity and normalized average zero-shot accuracy when quantizing Llama3.2-1B-Instruct to 4 bits using round-to-nearest (RTN).}
\vspace{-0.3cm}
\resizebox{0.5\textwidth}{!}
{\begin{tabular}{ccccc}
\toprule
$\theta$ & 0.1 & 0.01 & 0.001 & 0.0001 \\  
\midrule
\textbf{WikiText2 ($\downarrow$)} & 74.5 & \textbf{25.4} & 105.0 & 216.0  \\
\textbf{0-shot ($\uparrow$)} & 44.2 & \textbf{53.0} & 44.7 & 42.6 \\
\bottomrule
\end{tabular}}
\label{tbl:magr_alpha}
\end{table*}

\ICnew{\textbf{SpinQuant.} When applying SpinQuant, \cite{spinquant} do so after activation (and KV cache) quantization but before weight quantization using an 800-sample calibration dataset; their ablation study demonstrates negligible degradation when using 128 samples.
Thus, we employ Cayley SGD on a network where only activations are quantized to optimize the learnable rotations for 100 iterations using a calibration dataset constructed of 128 random samples from the C4 dataset. }

\ICnewnew{\textbf{SmoothRot.} When applying SmoothRot \cite{czako2025smoothrot}, we do so before quantizing weights or activations. Similar to SmoothQuant, SmoothRot requires the selection of a hyperparameter (\textit{i.e.}, migration strength) to control the scaling optimization criteria. In our experiments, we use a migration strength of 0.6 as it empirically performed well for Llama3 1B.}

\subsection{Grid scaling ablation study for 2 bits and fewer}
\label{sec:ablation_2_bit}

\ICnew{In \cref{experiments:weight_only_quant}, we \SZnewnew{have presented} weight-only PTQ results when quantizing to 2 bits or fewer on the scaled min-max grid \SZnewnew{with $\beta=0.8$}. Here, in \cref{tbl:2b_and_below_beta1}, we provide additional results that demonstrate \BiDQ~outperforms other rounding algorithms on another choice \SZnewnew{$\beta=1$}. Recall that we jointly apply Hadamard-based incoherence processing (HIP) and weight magnitude reduction (MagR) as quantization transforms before each rounding algorithm. Our results highlight that $\beta=0.8$ (see \cref{tbl:2b_and_below}) is an overall better choice for scaling the min-max grid in this setting, which is consistent with \cite{zhang2024magr}, and that \BiDQ~provides the best results on both grids at all bit widths and model sizes.} Our results with $\beta=1$ also \SZnewnew{show} that Qronos is more robust than GPTAQ when $\beta$ is not carefully selected.

\begin{table*}[h]
\centering
\caption{\textbf{Weight-only quantization of Llama3 models to 2 bits or fewer with $\beta=1$.} We jointly apply HIP and MagR as quantization transforms (stage 1 {in Figure~\ref{fig:quantization pipeline}}) and compare different rounding functions (stage 2) on the scaled min-max grid (see \cref{sec:exp}). Note that these results complement \cref{tbl:2b_and_below}, which presents results with $\beta=0.8$.}
\vspace{-0.3cm}
\resizebox{0.7\textwidth}{!}
{\begin{tabular}{clccc!{\vrule width 0.5pt}ccc}
\toprule
 &  & \multicolumn{3}{c}{\textbf{WikiText2} ($\downarrow$)} & \multicolumn{3}{c}{\textbf{0-shot} ($\uparrow$)} \\
 &  & 1B & 3B & 8B & 1B & 3B & 8B \\ %\cmidrule(l){3-10}
 \midrule
BF16 & - & 8.9 & 7.1 & 5.9 & 59.4 & 67.5 & 74.4 \\  
\midrule
\multirow{5}{*}{2-bit} & RTN & 1e4 & 1e4 & 2e4 & 32.4 & 32.4 &  32.9 \\
& OPTQ & 45.3 & 20.8 & 18.9 & 35.2 & 39.3 & 41.2 \\\
& GPFQ & 47.5 & 22.4 & 17.8 & 33.9 & 38.4 & 39.2 \\
& GPTAQ & 33.8 & 18.0 & 16.4 & 36.3 & 40.7 & 41.3 \\
& \BiDQ & \textbf{24.6} & \textbf{14.9} & \textbf{12.4} & \textbf{38.4} & \textbf{43.4} & \textbf{45.6} \\
\midrule
\multirow{5}{*}{1.58-bit} & RTN & 2e5 & 3e5 & 6e5 & 32.0 & 32.6 & 32.1 \\
& OPTQ & 5e3 & 4e2 & 3e2 & 32.5 & 32.4 & 32.2 \\
& GPFQ & 6e2 & 7e2 & 5e2 & 31.2 & 32.5 & 32.7 \\
& GPTAQ & 2e3 & 3e2 & 2e2 & 32.2 & 32.5 & 33.2 \\
& \BiDQ & \textbf{79.5} & \textbf{48.3} & \textbf{34.8} & \textbf{32.9} & \textbf{32.8} & \textbf{34.3} \\
\bottomrule
\end{tabular}}
\label{tbl:2b_and_below_beta1}
\end{table*}

\begin{figure}[t]
\centering
\includegraphics[width=0.8\linewidth]{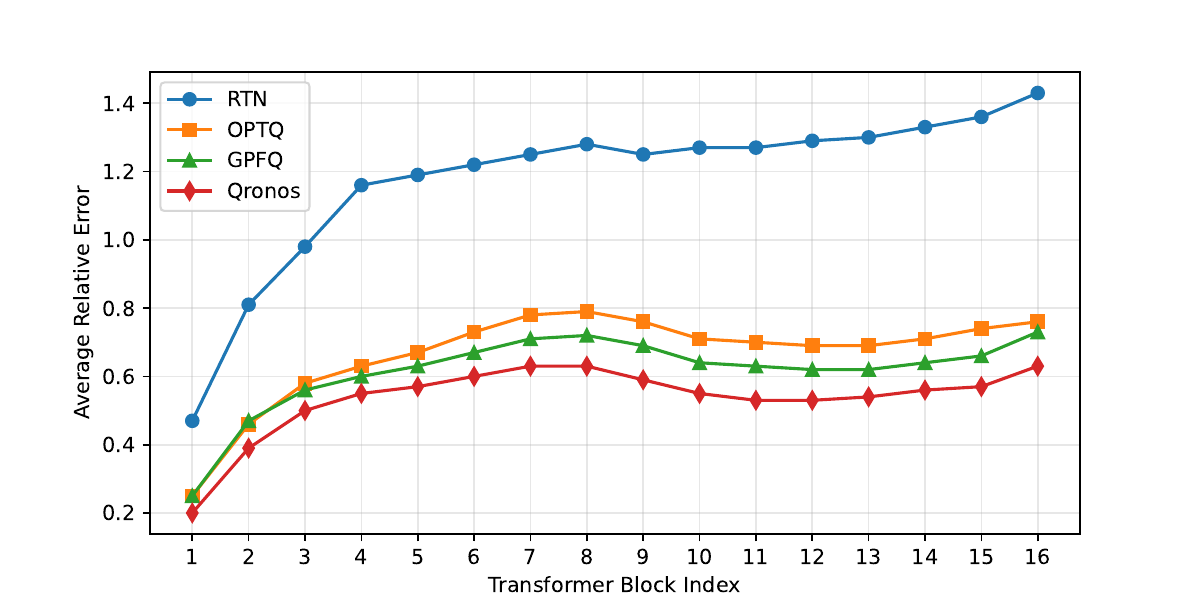}
\caption{We visualize the evolution of the average relative error over transformer blocks when quantizing the Llama3 1B foundation model to 3 bits, further discussed in \cref{appendix:optq_vs_qronos}.}
\label{fig:optq_vs_qronos}
\end{figure}

\section{%\RSnew{Further} 
More on Why Qronos Outperforms OPTQ}\label{appendix:optq_vs_qronos}
% \section{On the Importance of Correcting Error from Input Drift}

\ICnew{Let $W$ be the full-precision weights of a layer, and $Q$ their quantized counterparts. Let $X$ be the input to the layer and let its (possibly quantized) counterpart be $\widetilde{X}$; importantly, $\widetilde{X}$ reflects both activation quantization and the residual error propagated from previously quantized layers (possibly from previous blocks). Let \(Y, \Tilde{Y} \) denote the respective outputs resulting from inputs \( X, \Tilde{X} \).}

\HZnew{For any single layer, OPTQ only attempts to minimize $\Vert \widetilde{X}(W - Q) \Vert_F$, which ignores the mismatch between $X$ and $\widetilde{X}$. In contrast, Qronos attempts to minimize $\Vert XW - \widetilde{X}Q \Vert_F$, which is the actual discrepancy between the full-precision outputs and their quantized counterparts.} 

\HZnew{A simple triangle inequality intuitively explains the distinction between OPTQ and Qronos:
$$ \Vert Y - \widetilde{Y}\Vert_{F} = \Vert XW - \widetilde{X}Q \Vert_F \leq \Vert (X - \widetilde{X})W \Vert_F + \Vert \widetilde{X} (W - Q) \Vert_F. $$
While OPTQ only corrects the second term, Qronos corrects both terms. 
Thus, OPTQ only corrects quantization error in the weights at a given layer while Qronos corrects not only quantization error in both the weights and activations at a given layer, but also residual quantization error coming from previous layers, possibly from previous blocks.}
% \subsection{Why scaling factors cannot solve our objective function}\label{appendx:scales}
% \RSnote{How about we make this a remark instead of a subsection?}

% {\bf Why scaling factors cannot solve our objective function. }
\HZnew{Furthermore, tuning the quantization grid (\textit{i.e.}, scaling factors and zeros points) cannot effectively minimize our objective in ~\Eqref{obj}. As before, decomposing $XW - \widetilde{X}Q = [(X-\widetilde{X})W] + [\widetilde{X}(W-Q)]$ isolates two error sources. Tuning quantization grids of the current layer only adjusts $Q$, and thus can affect only the second term, while the first term is untouched by any choice of quantization grids. Hence, it cannot close the performance gap between OPTQ and Qronos.}

\RSnew{To illustrate this, }\SZnew{we  empirically compare quantization error accumulation by measuring the relative \( \ell_2 \) error, given by \( \Vert Y - \widetilde{Y} \Vert / \Vert Y \Vert \), after each transformer block in Llama3.2 1B when quantizing weights to 3 bits, as in Section~\ref{experiments:weight_only_quant}. Here, in Figure~\ref{fig:optq_vs_qronos}, we report the relative \( \ell_2 \) error averaged over each token in our calibration dataset (\textit{i.e.}, 128 samples of 2048 tokens from WikiText2). Qronos yields the lowest average relative calibration error for each block, with 16\% and 13\% improvement over OPTQ and GPFQ, respectively, at the output of the final block.}
\section{Preliminary Propositions}

\begin{proposition}\label{prop:qw update}
The update rule given by
  \begin{align*}
    q_t&=\underset{p\in\mathcal{A}}{\mathrm{argmin}}\frac{1}{2}\|Xw-\sum_{j=1}^{t-1}q_{j}\widetilde{X}_{j}-p\widetilde{X}_{t}-\sum_{j=t+1}^{N}w^{(t-1)}_{j}\widetilde{X}_{j}\|^2,\\
    w^{(t)}_{\geq t+1}&=\underset{(v_{t+1},...,v_{N})\in\mathbb{R}^{N-t}}{\mathrm{argmin}}\frac{1}{2}\|Xw-\sum_{j=1}^{t}q_{j}\widetilde{X}_{j}-\sum_{j=t+1}^{N}v_j\widetilde{X}_j\|^2.
\end{align*}
has closed-form expressions
\begin{gather*}
    q_t = \mathcal{Q}\left(\frac{\langle Xw-\sum_{j=1}^{t-1}q_{j}\widetilde{X}_{j}-\sum_{j=t+1}^{N}w^{(t-1)}_{j}\widetilde{X}_{j},\widetilde{X}_{t}\rangle}{\|\widetilde{X}_{t}\|^2}\right)
\end{gather*}
and
\begin{gather*}
    w^{(t)}_{\geq t+1}=\widetilde{X}_{\geq t+1}^{\dagger}\left(Xw - \widetilde{X}_{\leq t}q_{\leq t}\right).
\end{gather*}
\end{proposition}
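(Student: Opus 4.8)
The plan is to treat the two update rules separately, since the minimization over $p\in\mathcal{A}$ (with the tail weights $w^{(t-1)}_{\geq t+1}$ held fixed) and the subsequent minimization over the tail weights (with $q_{\leq t}$ held fixed) are decoupled optimization problems. For the first rule, I would collect the terms not depending on $p$ into a single residual $r := Xw - \sum_{j=1}^{t-1} q_j \widetilde{X}_j - \sum_{j=t+1}^{N} w^{(t-1)}_j \widetilde{X}_j$, so the objective becomes $\tfrac12\|r - p\widetilde{X}_t\|^2 = \tfrac12\|\widetilde{X}_t\|^2\big(p - p^\star\big)^2 + \text{const}$ with $p^\star = \langle r,\widetilde{X}_t\rangle/\|\widetilde{X}_t\|^2$ (here $\widetilde{X}_t\neq 0$, which holds whenever $\widetilde{X}^\top\widetilde{X}$ is invertible, as used elsewhere). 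Since this scalar quadratic is strictly increasing in $|p-p^\star|$, any minimizer over the discrete set $\mathcal{A}$ must be a point of $\mathcal{A}$ nearest to $p^\star$; as $\mathcal{Q}$ is by definition the round-to-nearest operator for $\mathcal{A}$, we get $q_t=\mathcal{Q}(p^\star)$, which is the claimed formula.

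For the second rule, I would recognize it as an ordinary unconstrained linear least-squares problem: writing $b := Xw - \widetilde{X}_{\leq t}q_{\leq t}$ and $A := \widetilde{X}_{\geq t+1}$, we minimize $\tfrac12\|b-Av\|^2$ over $v\in\mathbb{R}^{N-t}$. The minimizers are exactly the solutions of the normal equations $A^\top A v = A^\top b$, and $v = A^\dagger b$ (with $A^\dagger$ the Moore--Penrose pseudoinverse) is always one of them --- the minimum-norm one --- by the standard characterization of least-squares solutions. This yields $w^{(t)}_{\geq t+1}=\widetilde{X}_{\geq t+1}^{\dagger}\big(Xw-\widetilde{X}_{\leq t}q_{\leq t}\big)$.

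There is no deep difficulty here; the proposition is essentially a bookkeeping lemma packaging the per-step subproblems of Qronos into closed form. The only points that merit care are (i) the justification that rounding the scalar minimizer $p^\star$ to the nearest alphabet point solves the discrete minimization, which rests on the monotonicity of a one-dimensional convex quadratic in the distance from its vertex and on the identification of $\mathcal{Q}$ with RTN (any tie being resolved by whatever convention $\mathcal{Q}$ uses, including clipping when $\mathcal{A}$ is bounded), and (ii) the observation that when $\widetilde{X}_{\geq t+1}$ lacks full column rank the minimizer is non-unique, so the statement should be read as asserting that the displayed pseudoinverse expression is \emph{a} (indeed the minimum-norm) minimizer. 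I expect (i) to be the part a careful reader would most want spelled out, though it remains elementary.
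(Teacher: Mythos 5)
Your proof is correct and follows essentially the same route as the paper's: observe that the $q_t$ subproblem is a scalar convex quadratic in $p$, whose minimizer over $\mathcal{A}$ is the nearest grid point, i.e.\ $\mathcal{Q}$ applied to the unconstrained minimizer; and observe that the $w^{(t)}_{\geq t+1}$ subproblem is an ordinary least-squares problem solved by the pseudoinverse. Your extra remarks on tie-breaking and on non-uniqueness when $\widetilde{X}_{\geq t+1}$ is rank-deficient are sensible precision but do not change the argument.
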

\begin{proof}
    For $q_t$, the corresponding optimization objective function is a one-dimensional quadratic function of $p$. Since minimizing a quadratic function over a discrete set $\mathcal{A}$ reduces to rounding its real-valued minimizer, we  compute the real-valued minimizer $$\frac{\langle Xw-\sum_{j=1}^{t-1}q_{j}\widetilde{X}_{j}-\sum_{j=t+1}^{N}w^{(t-1)}_{j}\widetilde{X}_{j},\widetilde{X}_{t}\rangle}{\|\widetilde{X}_{t}\|^2}.$$Thus, we obtain the closed-form expression of $q_t$,
    \begin{gather*}
        %\underset{p\in\mathcal{A}}{\mathrm{argmin}}\frac{1}{2}\|Xw-\sum_{j=1}^{t}q_{j}\widetilde{X}_{j}-p\widetilde{X}_{t+1}-\sum_{j=t+2}^{N}w^{(t)}_{j}\widetilde{X}_{j}\|^2
        q_{t}=\mathcal{Q}\left(\frac{\langle Xw-\sum_{j=1}^{t-1}q_{j}\widetilde{X}_{j}-\sum_{j=t+1}^{N}w^{(t-1)}_{j}\widetilde{X}_{j},\widetilde{X}_{t}\rangle}{\|\widetilde{X}_{t}\|^2}\right),
    \end{gather*}
    where $\mathcal{Q}$ is the round-to-nearest operator.

    For $w^{(t)}_{\geq t+1}$, the corresponding optimization problem is an unconstrained least-square problem in the form of $\min_{v\in\mathbb{R}^{N-t}} \|Ax-b\|^2$, with $A=\widetilde{X}_{\geq t+1}$ and $b=Xw-\widetilde{X}_{\leq t}q_{\leq t}$. Thus, the minimizer is given by $A^{\dagger}b$, which gives the desired closed-form expression.
\end{proof}

\begin{proposition}\label{prop:bidq_first_step}
 The update rule given by \begin{gather*}  
 q_{1}=\mathcal{Q}\left(\dfrac{\widetilde{X}_1^{\top} (Xw - \widetilde{X}_{\geq2}w^{(0)}_{\geq 2})}{\|\widetilde{X}_1\|^2}\right),\\w^{(1)}_{\geq 2}=\widetilde{X}_{\geq2}^{\dagger}\left(Xw - \widetilde{X}_{1}q_{1}\right)
 \end{gather*}is equivalent to
 \begin{align*}
    q_1 & = \mathcal{Q} \left( \frac{G_{1,\geq 1} w - H_{1, \geq 2} w^{(0)}_{\geq 2}}{H_{11}} \right) \\
   w^{(1)}_{\geq 2} & = (H_{\geq2, \geq2})^{-1} \left( G_{\geq2,\geq1}w - H_{\geq2,1}q_1 \right),
\end{align*}
where $G = \widetilde{X}^T X \in \mathbb{R}^{N\times N}$ and $H = \widetilde{X}^T \widetilde{X} \in \mathbb{R}^{N \times N}$.
\end{proposition}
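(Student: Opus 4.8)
The plan is to verify both identities simply by unfolding the definitions $G = \widetilde{X}^\top X$ and $H = \widetilde{X}^\top \widetilde{X}$ and matching the relevant submatrix blocks against the corresponding inner products of columns; the only point that is not pure bookkeeping is the use of the normal-equation form of the pseudoinverse, which I address last. For the $q_1$ update, I would observe that $\widetilde{X}_1^\top X w = (\widetilde{X}^\top X)_{1,\geq 1}\, w = G_{1,\geq 1} w$ (pairing $Xw\in\mathbb{R}^m$ with the first column $\widetilde{X}_1$ is exactly the first row of $G$ applied to $w$), that $\widetilde{X}_1^\top \widetilde{X}_{\geq 2}\, w^{(0)}_{\geq 2} = (\widetilde{X}^\top \widetilde{X})_{1,\geq 2}\, w^{(0)}_{\geq 2} = H_{1,\geq 2}\, w^{(0)}_{\geq 2}$, and that $\|\widetilde{X}_1\|^2 = \widetilde{X}_1^\top \widetilde{X}_1 = H_{11}$. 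Substituting these three identifications into the scalar argument of $\mathcal{Q}$ produces the claimed expression verbatim, and since $\mathcal{Q}$ is applied to literally the same real number in both forms, nothing further is required.

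For the $w^{(1)}_{\geq 2}$ update, the key step is to rewrite the pseudoinverse. Because the target formula uses $(H_{\geq 2,\geq 2})^{-1}$, we work on the domain where $H_{\geq 2,\geq 2} = \widetilde{X}_{\geq 2}^\top \widetilde{X}_{\geq 2}$ is invertible, equivalently where $\widetilde{X}_{\geq 2}$ has full column rank — which in the intended setting follows from invertibility of $H = \widetilde{X}^\top\widetilde{X}$, since any subset of the columns of a full-column-rank matrix is again full column rank. Under this condition $\widetilde{X}_{\geq 2}^\dagger = (\widetilde{X}_{\geq 2}^\top \widetilde{X}_{\geq 2})^{-1}\widetilde{X}_{\geq 2}^\top$, so
\[
w^{(1)}_{\geq 2} = (\widetilde{X}_{\geq 2}^\top \widetilde{X}_{\geq 2})^{-1}\widetilde{X}_{\geq 2}^\top\big(Xw - \widetilde{X}_1 q_1\big) = (H_{\geq 2,\geq 2})^{-1}\big(\widetilde{X}_{\geq 2}^\top X w - \widetilde{X}_{\geq 2}^\top \widetilde{X}_1\, q_1\big).
\]
It then remains to identify the two blocks $\widetilde{X}_{\geq 2}^\top X = (\widetilde{X}^\top X)_{\geq 2,\geq 1} = G_{\geq 2,\geq 1}$ and $\widetilde{X}_{\geq 2}^\top \widetilde{X}_1 = (\widetilde{X}^\top \widetilde{X})_{\geq 2,1} = H_{\geq 2,1}$, which turns the display above into exactly the asserted formula.

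The main — indeed only — obstacle is justifying the normal-equation identity $\widetilde{X}_{\geq 2}^\dagger = (\widetilde{X}_{\geq 2}^\top \widetilde{X}_{\geq 2})^{-1}\widetilde{X}_{\geq 2}^\top$, which hinges on $\widetilde{X}_{\geq 2}$ having full column rank; this is in any case implicit in the statement, since otherwise $(H_{\geq 2,\geq 2})^{-1}$ in the target formula is undefined, so the two formulations are equivalent precisely on the domain where the second one makes sense. Everything else is a direct translation between inner products of columns of $\widetilde{X}$ and $X$ and the corresponding entries and blocks of $H$ and $G$.
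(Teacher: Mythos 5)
Your proof is correct and follows essentially the same approach as the paper's: identify the submatrix blocks of $G$ and $H$ with the corresponding column inner products, and use the normal-equation form $\widetilde{X}_{\geq 2}^\dagger = (\widetilde{X}_{\geq 2}^\top \widetilde{X}_{\geq 2})^{-1}\widetilde{X}_{\geq 2}^\top$. Your explicit remark that the pseudoinverse identity requires $\widetilde{X}_{\geq 2}$ to have full column rank is a minor but welcome clarification that the paper leaves implicit.
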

\begin{proof}
For $q_1$, we have $\widetilde{X}_{1}^{\top}X = (\widetilde{X}^{\top}X)_{1,\geq 1}=G_{1,\geq 1}$. Also, $\widetilde{X}_{1}^{\top}\widetilde{X}_{\geq 2}=(\widetilde{X}^{\top}\widetilde{X})_{1,\geq 2}=H_{1,\geq 2}$. Thus, $\widetilde{X}_1^{\top} (Xw - \widetilde{X}_{\geq2}w^{(0)}_{\geq 2})=G_{1,\geq 1} w - H_{1, \geq 2} w^{(0)}_{\geq 2}$. Further, $\|\widetilde{X}_1\|^2 = (\widetilde{X}^{\top}\widetilde{X})_{11}=H_{11}$. This gives the equivalence for updating $q_1$.

For $w^{(1)}_{\geq 2}$, $\widetilde{X}_{\geq 2}$ is given by $(\widetilde{X}_{\geq 2}^{\top}\widetilde{X}_{\geq 2})^{-1}\widetilde{X}_{\geq 2}^{\top}=(H_{\geq 2,\geq 2})^{-1}\widetilde{X}_{\geq 2}^{\top}$. Then \begin{align*}\widetilde{X}_{\geq2}^{\dagger}\left(Xw - \widetilde{X}_{1}q_{1}\right)&=(H_{\geq 2,\geq 2})^{-1}\widetilde{X}_{\geq 2}^{\top}\left(Xw - \widetilde{X}_{1}q_{1}\right)\\
&=(H_{\geq 2,\geq 2})^{-1}\left((\widetilde{X}^{\top}X)_{\geq 2, \geq 1}w-(\widetilde{X}^{\top}\widetilde{X})_{\geq 2, 1}q_1\right)\\
&=(H_{\geq2, \geq2})^{-1} \left( G_{\geq2,\geq1}w - H_{\geq2,1}q_1 \right).\end{align*}
This gives the equivalence for updating $w^{(1)}_{\geq 2}.$
\end{proof}

\section{Proof of Theorem \ref{thm:bidq_equivalence}}\label{sec:prove bidq equivalence}

\begin{proof}
    We use induction to prove the theorem. %The induction hypothesis is $\hat{w}_{\geq t+1}^{(t)}=w^{(t)}_{\geq t+1}$ and $\hat{q}_{t}=q_t$ both hold true for $t=1, 2,...,N$. 
    Since at $t=1$ equations \Eqref{original_bidq_update_q},  \Eqref{original_bidq_update_w}  and equations \Eqref{bidq_fast_first_q}, \Eqref{bidq_fast_first_w} are identical, the base case is trivially true. Now we proceed with the induction, assuming $\hat{w}^{(t)}_{\geq t+1}=w^{(t)}_{\geq t+1}$ and $\hat{q}_{t}=q_t$. 
    
    Using definition \Eqref{original_bidq_update_q} and \cref{prop:qw update}, we can obtain the closed-form expression,
    \begin{gather*}
        %\underset{p\in\mathcal{A}}{\mathrm{argmin}}\frac{1}{2}\|Xw-\sum_{j=1}^{t}q_{j}\widetilde{X}_{j}-p\widetilde{X}_{t+1}-\sum_{j=t+2}^{N}w^{(t)}_{j}\widetilde{X}_{j}\|^2
        q_{t+1}=\mathcal{Q}\left(\frac{\langle Xw-\sum_{j=1}^{t}q_{j}\widetilde{X}_{j}-\sum_{j=t+2}^{N}w^{(t)}_{j}\widetilde{X}_{j},\widetilde{X}_{t+1}\rangle}{\|\widetilde{X}_{t+1}\|^2}\right),
    \end{gather*}
    where $\mathcal{Q}$ is the RTN operator. Next we note that
(\ref{original_bidq_update_w}), which is used to compute $w^{(t)}_{\geq t+1}$, implies that $Xw-\sum_{j=1}^{t}q_{j}\widetilde{X}_{j}-\sum_{j=t+1}^{N}w^{(t)}_{j}\widetilde{X}_{j} $ is orthogonal to the column space of $\widetilde{X}_{\geq t+1}$. This in turn implies that  $\langle Xw-\sum_{j=1}^{t}q_{j}\tilde{X}_{j}-\sum_{j=t+1}^{N}w^{(t)}_{j}\widetilde{X}_{j} ,\widetilde{X}_{t+1} \rangle = 0$. Then we can compute,
    \begin{align*}
        q_{t+1}%&=\underset{p\in\mathcal{A}}{\mathrm{argmin}}\frac{1}{2}\|Xw-\sum_{j=1}^{t}q_{j}\widetilde{X}_{j}-p\widetilde{X}_{t+1}-\sum_{j=t+2}^{N}w^{(t)}_{j}\widetilde{X}_{j}\|^2\\
        &=\mathcal{Q}\left(\frac{\langle Xw-\sum_{j=1}^{t}q_{j}\widetilde{X}_{j}-\sum_{j=t+2}^{N}w^{(t)}_{j}\widetilde{X}_{j},\widetilde{X}_{t+1}\rangle}{\|\widetilde{X}_{t+1}\|^2}\right)\\
        &=\mathcal{Q}\left(\frac{\langle Xw-\sum_{j=1}^{t}q_{j}\widetilde{X}_{j}-\sum_{j=t+1}^{N}w^{(t)}_{j}\widetilde{X}_{j} + w^{(t)}_{t+1}\widetilde{X}_{t+1},\widetilde{X}_{t+1}\rangle}{\|\widetilde{X}_{t+1}\|^2}\right)\\
        &=\mathcal{Q}\left(\frac{\langle w^{(t)}_{t+1}\widetilde{X}_{t+1},\widetilde{X}_{t+1}\rangle}{\|\widetilde{X}_{t+1}\|^2}\right)\\
        &=\mathcal{Q}\left(w^{(t)}_{t+1}\right)=\mathcal{Q}\left(\hat{w}^{(t)}_{t+1}\right)=\hat{q}_{t+1},
    \end{align*}
    where in the last two inequalities, we used the induction hypothesis $\hat{w}^{(t)}_{\geq t+1}=w^{(t)}_{\geq t+1}$ and the update rule (\ref{bidq_fast_rest_q}).

    Next, we prove $\hat{w}^{(t+1)}_{\geq t+2}=w^{(t+1)}_{\geq t+2}.$ We first compute
    \begin{align*}   w^{(t+1)}_{\geq t+2}&=\underset{v_{\geq t+2}}{\mathrm{argmin}}\frac{1}{2}\|Xw-\sum_{j=1}^{t+1}q_{j}\widetilde{X}_{j}-\sum_{j=t+2}^{N}v_j\widetilde{X}_j\|^2\\    &=\underset{v_{\geq t+2}}{\mathrm{argmin}}\frac{1}{2}\|Xw-\sum_{j=1}^{t}q_{j}\widetilde{X}_{j}-\sum_{j=t+1}^{N}w^{(t)}_{j}\widetilde{X}_j+(w^{(t)}_{t+1}-q_{t+1})\widetilde{X}_{t+1}+\sum_{j=t+2}^{N}(w^{(t)}_{j}-v_j)\widetilde{X}_j\|^2.
    \end{align*}
    Due to the update rule (\ref{original_bidq_update_w}), $Xw-\sum_{j=1}^{t}q_{j}\widetilde{X}_{j}-\sum_{j=t+1}^{N}w^{(t)}_{j}\widetilde{X}_j$ is orthogonal to the column span of $\widetilde{X}_{\geq t+1}$, %. Thus $Xw-\sum_{j=1}^{t}q_{j}\widetilde{X}_{j}-\sum_{j=t+1}^{N}w^{(t)}_{j}\widetilde{X}_j$ is perpendicular 
    hence to $(w^{(t)}_{t+1}-q_{t+1})\widetilde{X}_{t+1}+\sum_{j=t+2}^{N}(w^{(t)}_{j}-v_j)\widetilde{X}_j$. Then, we have
    \begin{align*}   w^{(t+1)}_{\geq t+2}&=\underset{v_{\geq t+2}}{\mathrm{argmin}}\frac{1}{2}\|Xw-\sum_{j=1}^{t}q_{j}\widetilde{X}_{j}-\sum_{j=t+1}^{N}w^{(t)}_{j}\widetilde{X}_j+(w^{(t)}_{t+1}-q_{t+1})\widetilde{X}_{t+1}+\sum_{j=t+2}^{N}(w^{(t)}_{j}-v_j)\widetilde{X}_j\|^2
    %\\  &=\underset{(v_{t+2},...,v_{N})\in\mathbb{R}^{N-t-1}}{\mathrm{argmin}}\frac{1}{2}\|(w^{(t)}_{t+1}-q_{t+1})\widetilde{X}_{t+1}+\sum_{j=t+2}^{N}(w^{(t)}_{j}-v_j)\widetilde{X}_j\|^2
    \\   &=%\underset{(v_{t+2},...,v_{N})
\underset{v_{\geq t+2}
    }{\mathrm{argmin}}\frac{1}{2}\|(\hat{w}^{(t)}_{t+1}-\hat{q}_{t+1})\widetilde{X}_{t+1}+\sum_{j=t+2}^{N}(\hat{w}^{(t)}_{j}-v_j)\widetilde{X}_j\|^2=\hat{w}^{(t+1)}_{\geq t+2},
    \end{align*}
    where  we used the Pythagorean theorem, the induction hypothesis $\hat{w}^{(t)}_{\geq t+1}=w^{(t)}_{\geq t+1}$, and the fact $q_{t+1}=\hat{q}_{t+1}$. This completes the induction.
\end{proof}
\section{Proof of Lemma \ref{thm:GPTQ1}}
Throughout this section, we denote $H_{\geq t,\geq t}=X_{\geq t}^\top X_{\geq t}\in\mathbb{R}^{(N-t+1)\times (N-t+1)}$ and $H^{-1}_{\geq t,\geq t}=(X_{\geq t}^\top X_{\geq t})^{-1}\in\mathbb{R}^{(N-t+1)\times (N-t+1)}$. We will begin with a few preliminary lemmas before we prove \cref{thm:GPTQ1}. While some of these lemmas may already be known,  we are not aware of any rigorous proofs in the literature. Thus, we provide our proofs here for completeness.

% Then the following holds.\RSnote{Do we need to cite somewhere for this? GPTQ? OBS? Maybe add some text that these lemmas are  known, but we include them for completeness.} 
\begin{lemma}\label{lemma:Hessian}
Denote by $[H^{-1}_{\geq t,\geq t}]_{11}$ the first entry of  $H^{-1}_{\geq t,\geq t}$ and by $[H^{-1}_{\geq t,\geq t}]_{\geq 2,1}\in\mathbb{R}^{N-t}$  the first column of $H^{-1}_{\geq t,\geq t}$ albeit with the first entry removed. Then

    $$
(X_{\geq t+1}^\top X_{\geq t+1})^{-1}X_{\geq t+1}^\top X_{t} = -\frac{[H^{-1}_{\geq t,\geq t}]_{\geq 2,1}}{[H^{-1}_{\geq t,\geq t}]_{11}}
.$$
    \begin{proof}
        We denote $r:=[H^{-1}_{\geq t,\geq t}]_{11}$ and $\mathbf{b}=[H^{-1}_{\geq t,\geq t}]_{\geq 2,1}$. Then $\begin{pmatrix}
        r \\ %\hdashline[2pt/2pt]
        \mathbf{b}
\end{pmatrix}$ is just the first column of $H^{-1}_{\geq t,\geq t}$, so we have $H_{\geq t,\geq t}\begin{pmatrix}
    r \\ %\hdashline[2pt/2pt]
    \mathbf{b}
\end{pmatrix}=\mathbf{e_1}$. Let us write $H = \left[
    \begin{array}{c;{2pt/2pt}c}
        X_t^\top X_t & X_t^\top X_{\geq t+1}\\ \hdashline[2pt/2pt]
        X_{\geq t+1}^\top X_t & X_{\geq t+1}^\top X_{\geq t+1} 
    \end{array}
\right]$. By comparing the two sides of $H\begin{pmatrix}
    r \\ %\hdashline[2pt/2pt]
    \mathbf{b}
\end{pmatrix}=\mathbf{e_1}$ we can observe $rX_{\geq t+1}^\top X_t + X_{\geq t+1}^\top X_{\geq t+1}\mathbf{b}=0$, which implies 
$$
(X_{\geq t+1}^\top X_{\geq t+1})^{-1}X_{\geq t+1}^\top X_t = -\frac{\mathbf{b}}{r}
$$ and finishes the proof.
    \end{proof}
\end{lemma}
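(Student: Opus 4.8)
The plan is to derive the identity from a single block-row of the defining equation for the first column of $H^{-1}_{\geq t,\geq t}$. Writing $\mathbf{v} = (r,\mathbf{b}^\top)^\top$ for that column, with $r = [H^{-1}_{\geq t,\geq t}]_{11}\in\mathbb{R}$ and $\mathbf{b} = [H^{-1}_{\geq t,\geq t}]_{\geq 2,1}\in\mathbb{R}^{N-t}$, the point is simply that $\mathbf{b}$ is forced, up to the scalar factor $r$, to equal $-(X_{\geq t+1}^\top X_{\geq t+1})^{-1}X_{\geq t+1}^\top X_t$, i.e.\ the least-squares coefficient of regressing $X_t$ onto $X_{\geq t+1}$.

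First I would record the invertibility facts that make both sides well defined. Since $H = X^\top X$ is assumed invertible, $X$ has full column rank, hence so does every column submatrix; in particular $X_{\geq t}$ and $X_{\geq t+1}$ have full column rank, so $H_{\geq t,\geq t} = X_{\geq t}^\top X_{\geq t}$ and $X_{\geq t+1}^\top X_{\geq t+1}$ are symmetric positive definite, hence invertible. Positive definiteness of $H^{-1}_{\geq t,\geq t}$ also gives $r = [H^{-1}_{\geq t,\geq t}]_{11} > 0$, so dividing by $r$ at the end is legitimate.

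Next I would partition $H_{\geq t,\geq t}$ by splitting off its first row and column, writing it as the $2\times2$ block matrix with $(1,1)$ block the scalar $X_t^\top X_t$, off-diagonal blocks $X_t^\top X_{\geq t+1}$ and $X_{\geq t+1}^\top X_t$, and $(2,2)$ block $X_{\geq t+1}^\top X_{\geq t+1}$. By definition of $\mathbf{v}$ we have $H_{\geq t,\geq t}\mathbf{v} = \mathbf{e}_1$. Reading off the second block-row of this equation gives $r\,X_{\geq t+1}^\top X_t + (X_{\geq t+1}^\top X_{\geq t+1})\,\mathbf{b} = \mathbf{0}$; left-multiplying by $(X_{\geq t+1}^\top X_{\geq t+1})^{-1}$ and dividing by $r$ yields $(X_{\geq t+1}^\top X_{\geq t+1})^{-1}X_{\geq t+1}^\top X_t = -\mathbf{b}/r$, which is exactly the claim.

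There is no genuine obstacle: the argument is a routine block-inverse manipulation, and the only points needing a moment of care are the two invertibility/positivity facts above (so the inverses on both sides exist and the final division by $r$ is valid), which are inherited from the positive definiteness of $X^\top X$ by its column submatrices. One could alternatively substitute the explicit block-inverse (Schur-complement) formula for $H^{-1}_{\geq t,\geq t}$ into its first column and simplify, but the route through $H_{\geq t,\geq t}\mathbf{v} = \mathbf{e}_1$ is shorter and sidesteps writing out the Schur complement.
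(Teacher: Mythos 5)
Your proposal is correct and follows essentially the same route as the paper's proof: reading the second block-row of $H_{\geq t,\geq t}\begin{pmatrix} r \\ \mathbf{b}\end{pmatrix} = \mathbf{e}_1$ with the block partition of $X_{\geq t}^\top X_{\geq t}$ and solving for $\mathbf{b}/r$. The only difference is that you additionally spell out the positive-definiteness facts guaranteeing the inverses exist and $r>0$, which the paper leaves implicit.
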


The next lemma establishes how one can efficiently compute $H^{-1}_{\geq t+1,\geq t+1}$ from $H^{-1}_{\geq t,\geq t}$.

\begin{lemma}\label{lemma:update inverse Hessian}%~\\
% \RSnote{Not a fan of the change of notation here. Especially since we use a different notation in each lemma. Please make it all consistent by using the $\geq t$ subscripts.}
% Let $H^{-1}=H^{-1}_{\geq t,\geq t}=(X_{\geq t}^\top X_{\geq t})^{-1}$ and $K^{-1}=H^{-1}_{\geq t+1,\geq t+1}=(X_{\geq t+1}^\top X_{\geq t+1})^{-1}$. $K^{-1}$ can be derived from $H^{-1}$ via the following update formula
$H^{-1}_{\geq t+1,\geq t+1}$ can be efficiently computed from $H^{-1}_{\geq t,\geq t}$ via
$$
H^{-1}_{\geq t+1,\geq t+1}=\left(H^{-1}_{\geq t,\geq t}-\frac{1}{[H^{-1}_{\geq t,\geq t}]_{11}}[H^{-1}_{\geq t,\geq t}]_{\geq 1,1}[H^{-1}_{\geq t,\geq t}]_{1,\geq 1}\right)_{\geq 2,\geq 2}.
$$
\end{lemma}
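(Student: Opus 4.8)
The plan is to use the standard block-inverse (Schur complement) identity applied to the $2\times 2$ block partition of $H_{\geq t,\geq t}$ into its leading $1\times 1$ block $X_t^\top X_t$ and the trailing block $H_{\geq t+1,\geq t+1} = X_{\geq t+1}^\top X_{\geq t+1}$. Write
\[
H_{\geq t,\geq t} = \begin{pmatrix} a & c^\top \\ c & D \end{pmatrix},
\qquad a = X_t^\top X_t \in \mathbb{R},\quad c = X_{\geq t+1}^\top X_t \in \mathbb{R}^{N-t},\quad D = H_{\geq t+1,\geq t+1}.
\]
The block-inverse formula gives the $(1,1)$ scalar entry, $(1,\geq 2)$ row, $(\geq 2, 1)$ column, and $(\geq 2,\geq 2)$ block of $H^{-1}_{\geq t,\geq t}$ explicitly in terms of $a$, $c$, $D^{-1}$, and the Schur complement $s = a - c^\top D^{-1} c$. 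In particular $[H^{-1}_{\geq t,\geq t}]_{11} = s^{-1}$, $[H^{-1}_{\geq t,\geq t}]_{\geq 2,1} = -s^{-1} D^{-1} c$, and $[H^{-1}_{\geq t,\geq t}]_{\geq 2,\geq 2} = D^{-1} + s^{-1} D^{-1} c c^\top D^{-1}$.

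From these expressions I would directly compute the claimed right-hand side. Using $[H^{-1}_{\geq t,\geq t}]_{\geq 1,1} = (s^{-1},\, -s^{-1}D^{-1}c)^\top$ and $[H^{-1}_{\geq t,\geq t}]_{1,\geq 1} = (s^{-1},\, -s^{-1}c^\top D^{-1})$, the rank-one correction term $\frac{1}{[H^{-1}_{\geq t,\geq t}]_{11}}[H^{-1}_{\geq t,\geq t}]_{\geq 1,1}[H^{-1}_{\geq t,\geq t}]_{1,\geq 1}$ equals $s \cdot (s^{-1},\,-s^{-1}D^{-1}c)^\top (s^{-1},\,-s^{-1}c^\top D^{-1})$, whose $(\geq 2,\geq 2)$ block is $s \cdot (s^{-1}D^{-1}c)(s^{-1}c^\top D^{-1}) = s^{-1} D^{-1}cc^\top D^{-1}$. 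Subtracting this from $[H^{-1}_{\geq t,\geq t}]_{\geq 2,\geq 2} = D^{-1} + s^{-1}D^{-1}cc^\top D^{-1}$ leaves exactly $D^{-1} = H^{-1}_{\geq t+1,\geq t+1}$, which is the claim. (One should note $a>0$ and $s>0$ since $H_{\geq t,\geq t}\succ 0$, so all inverses exist.)

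The only mild subtlety — and the place to be careful rather than a genuine obstacle — is bookkeeping of which sub-blocks of the $2\times 2$ partition correspond to the "$\geq 2$" indexing in the statement, and checking that taking the $(\geq 2,\geq 2)$ block commutes with the subtraction (it does, since block extraction is linear). Everything else is a routine verification via the Schur-complement identity. Alternatively, one could avoid the explicit block-inverse formula entirely: defining $M := \big(H^{-1}_{\geq t,\geq t} - [H^{-1}_{\geq t,\geq t}]_{11}^{-1}[H^{-1}_{\geq t,\geq t}]_{\geq 1,1}[H^{-1}_{\geq t,\geq t}]_{1,\geq 1}\big)_{\geq 2,\geq 2}$, one checks directly that $M \cdot D = I$ by multiplying out and using $H_{\geq t,\geq t} H^{-1}_{\geq t,\geq t} = I$ block-wise (which yields $aD^{-1}$-free relations among the blocks of $H^{-1}_{\geq t,\geq t}$); this is essentially the computation already carried out in Lemma \ref{lemma:Hessian} extended one step further, and may be the cleaner route to present.
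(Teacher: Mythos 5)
Your argument is correct, and it is a genuinely cleaner variant of the paper's. Both proofs use the $2\times2$ block-inverse identity for $H_{\geq t,\geq t} = \left(\begin{smallmatrix} a & c^\top \\ c & D \end{smallmatrix}\right)$, but they pivot on opposite blocks. The paper takes the Schur complement of the leading scalar $a$, so its formula expresses the $(\geq 2,\geq 2)$ block of $H^{-1}_{\geq t,\geq t}$ as $S^{-1}$ with $S = D - a^{-1}cc^\top$; since the target is $D^{-1}$, not $S^{-1}$, the paper then needs a separate application of the Sherman--Morrison formula to pass from $S^{-1}$ to $D^{-1}$ before matching terms. You instead take the Schur complement of the trailing block $D$, so that $D^{-1}$ appears directly in the $(\geq 2,\geq 2)$ block of $H^{-1}_{\geq t,\geq t}$ as $D^{-1} + s^{-1}D^{-1}cc^\top D^{-1}$ with $s = a - c^\top D^{-1}c$, and the claimed rank-one subtraction visibly cancels the correction term in one line with no Sherman--Morrison needed. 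The two routes are of course equivalent, but yours removes one identity from the argument and keeps all quantities expressed in terms of $D^{-1}$, the object one actually wants; it is the version I would recommend writing up. Your remark that $a>0$ and $s>0$ (hence all inverses exist) because $H_{\geq t,\geq t}\succ 0$ is a point the paper leaves implicit and is worth keeping.
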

{We note that this is a simple rank-$1$ update followed by a submatrix slicing.}
\begin{proof}
We first recall a more general inverse formula for $2\times 2$ block matrix using the Schur complement.
Consider the $2\times 2$ block matrix
\[
M = \begin{pmatrix} A & B \\ C & D \end{pmatrix}.
\]
When \( A \) is invertible, the inverse of \( M \) is given by
\begin{equation}
    M^{-1} = \begin{pmatrix} A^{-1} + A^{-1} B S^{-1} C A^{-1} & -A^{-1} B S^{-1} \\ -S^{-1} C A^{-1} & S^{-1} \end{pmatrix},
\end{equation}
where \( S = D - C A^{-1} B \) is the Schur complement of \( A \) in \( M \).

When $A$ is a scalar $a$ and ${M}$ is symmetric, i.e.
\[
{M} = \begin{pmatrix} a & b^\top \\ b & D \end{pmatrix},
\]this formula becomes
\begin{equation*}
    {M}^{-1} = \begin{pmatrix} a^{-1} + a^{-2} b^\top S^{-1}b & -a^{-1} b^\top S^{-1} \\ -a^{-1}S^{-1}b & S^{-1} \end{pmatrix},
\end{equation*}
where $S=D-a^{-1}b b^\top$. 

By the Sherman–Morrison formula \citep{horn2012matrix}, we have
\begin{align*}
    D^{-1}&=S^{-1}-\frac{S^{-1}bb^\top S^{-1}}{a+b^\top S^{-1}b}\\
    &=S^{-1}-\frac{a^{-2} S^{-1}bb^\top S^{-1}}{a^{-1}+a^{-2}b^\top S^{-1}b}
    .
\end{align*}

Returning to our setting where ${M}^{-1}=H^{-1}_{\geq t,\geq t}$ and ${D}^{-1}=H^{-1}_{\geq t+1,\geq t+1}$, we have
\begin{equation*}
\begin{aligned}
H^{-1}_{\geq t+1,\geq t+1}&=[H^{-1}_{\geq t,\geq t}]_{\geq 2,\geq 2}-\frac{1}{[H^{-1}_{\geq t,\geq t}]_{11}}[H^{-1}_{\geq t,\geq t}]_{\geq 2,1}[H^{-1}_{\geq t,\geq t}]_{1,\geq 2} \\
 &=[H^{-1}_{\geq t,\geq t}]_{\geq 2,\geq 2}-\frac{1}{[H^{-1}_{\geq t,\geq t}]_{11}}\left([H^{-1}_{\geq t,\geq t}]_{\geq 1,1}[H^{-1}_{\geq t,\geq t}]_{1,\geq 1}\right)_{\geq 2,\geq 2} \\
 &=\left(H^{-1}_{\geq t,\geq t}-\frac{1}{[H^{-1}_{\geq t,\geq t}]_{11}}[H^{-1}_{\geq t,\geq t}]_{\geq 1,1}[H^{-1}_{\geq t,\geq t}]_{1,\geq 1}\right)_{\geq 2,\geq 2}.
\end{aligned}
\end{equation*}
\end{proof}

Using the above lemma and Cholesky decomposition \citep{horn2012matrix}, we can  further simplify the right hand side in \cref{lemma:Hessian} via the following lemma.

\begin{lemma}\label{lemma:cholesky}
Let $H^{-1}=(X^\top X)^{-1}$ and $H^{-1}=LL^\top$ be its Cholesky decomposition where $L$ is a lower triangular matrix, then
    $$
\frac{[H^{-1}_{\geq t,\geq t}]_{\geq 2,1}}{[H^{-1}_{\geq t,\geq t}]_{11}}=\frac{L_{\geq t+1,t}}{L_{tt}}\in\mathbb{R}^{N-t}
$$
holds for all $t\in[N-1]$.
\end{lemma}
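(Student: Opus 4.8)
The plan is to exploit the recursive (column-by-column) construction of the Cholesky factor together with the already-proven \cref{lemma:update inverse Hessian}, running an induction on $t$. Recall that since $H = X^\top X$ is positive definite, so is $H^{-1}$, and its Cholesky factor $L$ (lower triangular, positive diagonal) is unique and can be built recursively: for a positive definite $M$, the first column of its Cholesky factor is $M_{\geq 1,1}/\sqrt{M_{11}}$, one then forms the rank-one downdate $M - \tfrac{1}{M_{11}} M_{\geq 1,1} M_{1,\geq 1}$, whose first row and column vanish by symmetry, and the remaining columns of the factor are obtained by recursively Cholesky-factoring the trailing $(n-1)\times(n-1)$ block of this downdated matrix. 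Applying this with $M = H^{-1}$, the key invariant I want to carry through the induction is: \emph{the matrix handed to the $t$-th stage of this recursion is exactly $H^{-1}_{\geq t,\geq t}$}.

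The base case $t=1$ is immediate, since the matrix factored at the first stage is all of $H^{-1} = H^{-1}_{\geq 1,\geq 1}$. For the inductive step, suppose the matrix factored at stage $t$ is $H^{-1}_{\geq t,\geq t}$. Then by the recursion the $t$-th column of $L$ (restricted to rows $\geq t$) is $L_{\geq t,t} = [H^{-1}_{\geq t,\geq t}]_{\geq 1,1}/\sqrt{[H^{-1}_{\geq t,\geq t}]_{11}}$; reading off its first entry and its tail gives $L_{tt} = \sqrt{[H^{-1}_{\geq t,\geq t}]_{11}}$ and $L_{\geq t+1,t} = [H^{-1}_{\geq t,\geq t}]_{\geq 2,1}/\sqrt{[H^{-1}_{\geq t,\geq t}]_{11}}$. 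The matrix fed to stage $t+1$ is
\[
\Big(H^{-1}_{\geq t,\geq t} - \tfrac{1}{[H^{-1}_{\geq t,\geq t}]_{11}}\,[H^{-1}_{\geq t,\geq t}]_{\geq 1,1}\,[H^{-1}_{\geq t,\geq t}]_{1,\geq 1}\Big)_{\geq 2,\geq 2},
\]
which is precisely $H^{-1}_{\geq t+1,\geq t+1}$ by \cref{lemma:update inverse Hessian}. This closes the induction.

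Dividing the two expressions above then yields, for every $t \in [N-1]$,
\[
\frac{L_{\geq t+1,t}}{L_{tt}}
= \frac{[H^{-1}_{\geq t,\geq t}]_{\geq 2,1}\big/\sqrt{[H^{-1}_{\geq t,\geq t}]_{11}}}{\sqrt{[H^{-1}_{\geq t,\geq t}]_{11}}}
= \frac{[H^{-1}_{\geq t,\geq t}]_{\geq 2,1}}{[H^{-1}_{\geq t,\geq t}]_{11}},
\]
which is the claim. (An alternative derivation avoids \cref{lemma:update inverse Hessian} entirely: use the Schur-complement quotient identity, namely that the Schur complement of the leading $(t-1)\times(t-1)$ block of $H^{-1}$ equals $(H_{\geq t,\geq t})^{-1} = H^{-1}_{\geq t,\geq t}$, and combine it with the block form of the Cholesky algorithm, which makes $L_{\geq t,t}$ proportional to the first column of that Schur complement.) I do not expect a genuine obstacle here; the only care required is the index bookkeeping when peeling off one row and column at a time, since everything else is carried by the recursive uniqueness of Cholesky and by \cref{lemma:update inverse Hessian}, which is already in hand.
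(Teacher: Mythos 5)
Your proof is correct and takes essentially the same route as the paper's: both establish by induction, via \cref{lemma:update inverse Hessian}, that $L_{\geq t,\geq t}$ is the Cholesky factor of $H^{-1}_{\geq t,\geq t}$ (your invariant about the matrix handed to stage $t$ of the recursion is an equivalent phrasing), and then read off the identity from the first-column formula $M_{\geq 1,1} = R_{11}\,R_{\geq 1,1}$, $M_{11} = R_{11}^2$ for a Cholesky factorization $M = RR^\top$.
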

\begin{proof}
    We first prove that given the Cholesky decomposition $H^{-1}=LL^\top$, the Cholesky decomposition of $H^{-1}_{\geq t,\geq t}$ is $H^{-1}_{\geq t,\geq t}=(L_{\geq t,\geq t})(L_{\geq t,\geq t})^\top$ for all $t\in[N]$, where $H^{-1}_{\geq t,\geq t}=(X_{\geq t}^\top X_{\geq t})^{-1}\in\mathbb{R}^{(N-t+1)\times (N-t+1)}$.

    Let us proceed by induction. The base-case when $t=1$ holds by assumption, and we now assume the result holds for $t$. % and we are going to prove the case for $t+1$.
    By \cref{lemma:update inverse Hessian}, the updated inverse Hessian $H^{-1}_{\geq t+1,\geq t+1}=\left(H^{-1}_{\geq t,\geq t}-\frac{1}{[H^{-1}_{\geq t,\geq t}]_{11}}[H^{-1}_{\geq t,\geq t}]_{\geq 1,1}[H^{-1}_{\geq t,\geq t}]_{1,\geq 1}\right)_{\geq 2,\geq 2}$.
Thus, 
\begin{align*}
    \Big((L_{\geq t,\geq t})&(L_{\geq t,\geq t})^\top -\frac{1}{L_{tt}^2}((L_{\geq t,\geq t})_{11}\cdot[L_{\geq t,\geq t}]_{\geq 1,1}) ((L_{\geq t,\geq t})_{11}\cdot [L_{\geq t,\geq t}]_{\geq 1,1})^\top \Big)_{\geq 2,\geq 2}\\
    =&\left((L_{\geq t,\geq t})(L_{\geq t,\geq t})^\top-[L_{\geq t,\geq t}]_{\geq 1,1} [L_{\geq t,\geq t}]_{\geq 1,1}^\top \right)_{\geq 2,\geq 2} \\
    =&((L_{\geq t,\geq t})_{\geq 2,\geq 2})((L_{\geq t,\geq t})_{\geq 2,\geq 2})^\top\\
    =&(L_{\geq t+1,\geq t+1})(L_{\geq t+1,\geq t+1})^\top
\end{align*}
%In other words, updating $H^{-1}$ can be done by  truncating its Cholesky decomposition. 
%This fact, combined with the following observation, will finish the proof of this lemma.
This finishes the induction and we have Cholesky decomposition $H^{-1}_{\geq t,\geq t}=(L_{\geq t,\geq t})(L_{\geq t,\geq t})^\top$ for all $t\in[N]$.
To finish the proof, let $M=RR^\top$ be the Cholesky decomposition of any positive definite matrix $M$. By a direct computation, the first column of $M$ is $R[R^\top]_{\geq 1,1}=R_{11}\cdot [R]_{\geq 1,1}$ and the first entry $M_{11}=R_{11}^2$. Then we have $\frac{M_{\geq 1,1}}{M_{11}}=\frac{[R]_{\geq 1,1}}{R_{11}}$ which implies that $\frac{M_{\geq 2,1}}{M_{11}}=\frac{[R]_{\geq 2,1}}{R_{11}}$. In our case, we have $H^{-1}_{\geq t,\geq t}=(L_{\geq t,\geq t})(L_{\geq t,\geq t})^\top$ in the place of $M=RR^\top$. Thus,
$$
\frac{[H^{-1}_{\geq t,\geq t}]_{\geq 2,1}}{[H^{-1}_{\geq t,\geq t}]_{11}}= \frac{[L_{\geq t,\geq t}]_{\geq 2,1}}{[L_{\geq t,\geq t}]_{11}} =\frac{L_{\geq t+1,t}}{L_{tt}}.
$$
\end{proof}

With the above preliminary lemmas, now we are ready to prove Lemma \ref{thm:GPTQ1}
\begin{proof}[{\bf Proof of \cref{thm:GPTQ1}}]
    Since we initialize with $w^{(0)}=w$, $q_1=\Q(w_1)$ always holds. Thus the two iterations produce the same $q_1$ and $w^{(0)}_{\geq 1}$ . We proceed by induction. Assume at step $t$ that $q_{t}$ and $w^{(t-1)}_{\geq t}$ resulting from the update rules \Eqref{eq3} and \Eqref{eq4} match those following update rules \Eqref{eq1} and \Eqref{eq2}. In order to complete the induction, it suffices to show that (\ref{eq4}) and (\ref{eq2}) produce the same  $w^{(t)}_{\geq t+1}$, which naturally results in the same $q_{t+1}=\Q(w^{(t)}_{t+1})$.

    To that end, 
    we  note that the optimization problem defined by \Eqref{eq4} has a unique least-square solution as $X_{\geq t+1}$ has full column rank. The minimizer is given by 
\begin{align*}
    {w}_{\geq t+1}^{(t)}&=w_{\geq t+1}^{(t-1)}+(w^{(t-1)}_t-q_{t})X_{t+1:}^{\dagger}X_{t} \\
     &=w_{\geq t+1}+(w^{(t-1)}_t-q_{t})(X_{\geq t+1}^\top X_{\geq t+1})^{-1}X_{\geq t+1}^\top X_{t}
\end{align*}
By \cref{lemma:Hessian}, we have
$$
(X_{\geq t+1}^\top X_{\geq t+1})^{-1}X_{\geq t+1}^\top X_{t} = -\frac{[H^{-1}_{\geq t,\geq t}]_{\geq 2,1}}{[H^{-1}_{\geq t,\geq t}]_{11}}.
$$
Lastly, \cref{lemma:cholesky} gives us
$$
\frac{[H^{-1}_{\geq t,\geq t}]_{\geq 2,1}}{[H^{-1}_{\geq t,\geq t}]_{11}}=\frac{L_{\geq t+1,t}}{L_{tt}}\in\mathbb{R}^{N-t}.
$$
This matches $\Delta_{t+1}$ in \Eqref{eq2} and completes our induction.
\end{proof}

\section{Proof of Corollary \ref{thm:GPTQ2}}\label{sec:OPTQ new interpretation}

\begin{algorithm}
\caption{OPTQ: Quantize a layer \( W \) given inverse Hessian \( H^{-1} = (X^\top X)^{-1}\).}
\begin{algorithmic}[1]\label{OPTQ_vanilla}
\For{every $w$ in $W$ in parallel}
\State  \( q = \mathbf{0}^{N} \)\Comment{Initialize quantized neuron}
\State  \( H^{-1} = LL^\top \) \Comment{Perform Cholesky decomposition}
% \State \(H^{-1}\gets L \) \Comment{Cholesky reformulation}
\For{$t = 1$ to $N$} \Comment{Iterate over rows}
        \State $q_{t}= \Q(w_{t})$ 
        \State \( w_{\geq t} \gets w_{\geq t} - L_{\geq t,t}\cdot (w_{t} - q_{t})/L_{tt}  \)\Comment{Update remaining weights}
    \EndFor
\EndFor
\State \Return \( Q \)
\end{algorithmic}
\end{algorithm}

For our final result of this paper, we observe that updates of \( w^{(t)}_{\geq t+1} \) via  \Eqref{eq4} can be interpreted by observing that the term \( (q_t - w^{(t-1)}_t) X_t \) represents the error introduced by quantizing \( w^{(t-1)}_t \). The optimization problem \Eqref{eq4} seeks to mitigate this error by adjusting future weights so as to minimize the resulting distortion, measured in the \(\ell_2\)-norm. Notably, this step does not \emph{explicitly} attempt to correct errors introduced by earlier quantization steps \(1, \dots, t-1\). However, by combining the proof of Theorem~\ref{thm:bidq_equivalence} in the case when  $X = \widetilde{X}$ with Lemma~\ref{thm:GPTQ1}, we arrive at 
Corollary~\ref{thm:GPTQ2}, which provides a novel interpretation of OPTQ. It shows—perhaps unexpectedly—that %the update rules from 
\cref{OPTQ_vanilla} % is in fact equivalent to one that 
\emph{optimally corrects} the cumulative \ICnew{weight} quantization error incurred over the first \( t \) entries of \( w \).

\begin{proof}
    The proof is based on induction on both arguments of the trajectory. Let $\{(\hat{w}^{(t-1)}_{\geq t},\hat{q}_t)\}_{t=1}^{N}$ denote the trajectory generated by update rules \Eqref{eq5}, \Eqref{eq6}. And let $\{(w^{(t-1)}_{\geq t},q_{t})\}_{t=1}^{N}$ be the trajectory generated by \cref{OPTQ_vanilla}. Our goal is to prove $(\hat{w}^{(t-1)}_{\geq t},\hat{q}_t)=(w^{(t-1)}_{\geq t},q_{t})$ for $t=1,\dots,N$.
    
    By \cref{thm:GPTQ1}, the trajectory $\{(w^{(t-1)}_{\geq t},q_{t})\}_{t=1}^{N}$ generated using Cholesky decomposition in \cref{OPTQ_vanilla} can be equivalently regarded as generated from \Eqref{eq3}, \Eqref{eq4}. Thus, we will use \Eqref{eq3}, \Eqref{eq4} as the update rule of $w^{(t-1)}_{\geq t}$ and $q_t$ in the rest of our proof. In the base case, $\hat{w}^{(0)}_{\geq 1}=w^{(0)}_{\geq 1}$ are both initialized with $w$ and $$\hat{q}_1 = \underset{p\in\mathcal{A}}{\mathrm{argmin}}\frac{1}{2}\|Xw-pX_{1}-\sum_{j=2}^{N}w^{(0)}_{j}X_{j}\|^2=\underset{p\in\mathcal{A}}{\mathrm{argmin}}\frac{1}{2}\|(w_1-p)X_{1}\|^2=\Q(w_1)=q_1.$$ Thus $(w^{(0)}_{\geq 1},q_1)=(\hat{w}^{(0)}_{\geq 1}, \hat{q}_1)$. Assume $(\hat{w}^{(t-1)}_{\geq t},\hat{q}_t)=(w^{(t-1)}_{\geq t},q_t)$ holds true. Now we proceed to prove $(\hat{w}^{(t)}_{\geq t+1},\hat{q}_{t+1})=(w^{(t)}_{\geq t+1},q_{t+1})$.
    ~\\
     \textbf{Step 1:} We first prove $\hat{w}^{(t)}_{\geq t+1}={w}^{(t)}_{\geq t+1}$. By construction,
    \begin{gather*}
\hat{w}^{(t)}_{\geq t+1}=\underset{v_{\geq t+1}\in\mathbb{R}^{N-t}}{\mathrm{argmin}}\frac{1}{2}\|Xw-\sum_{j=1}^{t}\hat{q}_{j}X_{j}-\sum_{j=t+1}^{N}v_jX_j\|^2.
    \end{gather*}
    For an arbitrary $v_{\geq t+1}\in\mathbb{R}^{N-t}$, \begin{align*}
    Xw-\sum_{j=1}^{t}\hat{q}_{j}X_{j}&-\sum_{j=t+1}^{N}v_jX_j= \\ &\underbrace{(Xw-\sum_{j=1}^{t-1}\hat{q}_{j}X_{j}-\sum_{j=t}^{N}\hat{w}^{(t-1)}_{j}X_j)}_{\textbf{(I)}} + \underbrace{\left((\hat{w}^{(t-1)}_{t} - \hat{q}_{t})X_{t} +
    \sum_{j=t+1}^{N}(\hat{w}^{(t-1)}_{j}-v_j)X_j\right)}_{\textbf{(II)}}.
    \end{align*}
    
    Since $\hat{w}^{(t-1)}_{\geq t+1}$ is a minimizer of \Eqref{eq6},  the first term $\textbf{(I)}\in X_{\geq t}^{\perp}$,
    and clearly the second term $\textbf{(II)}\in \mathrm{span}\{X_t,......, X_N\}$.
    Thus, we have 
    \begin{gather*}
        \left\|Xw-\sum_{j=1}^{t}\hat{q}_{j}X_{j}-\sum_{j=t+1}^{N}v_jX_j\right\|^2=\left\|\textbf{(I)}\right\|^2 + \left\|\textbf{(II)}\right\|^2.
    \end{gather*}
    
    Notice that \textbf{(I)} does not depend on $v_{\geq t+1}$. Furthermore, $\hat{w}^{(t-1)}_{\geq t+1}$ and $\hat{q}_{t}$ in \textbf{(II)} can be replaced by ${w}^{(t-1)}_{\geq t+1}$ and $q_{t}$ respectively using our induction hypothesis. Thus,
    \begin{align*}     \hat{w}^{(t)}_{\geq t+1}&=\underset{v_{\geq t+1}\in\mathbb{R}^{N-t}}{\mathrm{argmin}}\frac{1}{2}\|Xw-\sum_{j=1}^{t}\hat{q}_{j}X_{j}-\sum_{j=t+1}^{N}v_jX_j\|^2\\
    &=\underset{v_{\geq t+1}\in\mathbb{R}^{N-t}}{\mathrm{argmin}}\frac{1}{2}\|(\hat{w}^{(t-1)}_{t} - \hat{q}_{t})X_{t} +
    \sum_{j=t+1}^{N}(\hat{w}^{(t-1)}_{j}-v_j)X_j\|^2\\
    &=\underset{v_{\geq t+1}\in\mathbb{R}^{N-t}}{\mathrm{argmin}}\frac{1}{2}\|({w}^{(t-1)}_{t} - {q}_{t})X_{t} +
    \sum_{j=t+1}^{N}({w}^{(t-1)}_{j}-v_j)X_j\|^2\\
    &={w}^{(t)}_{\geq t+1}.
    \end{align*}
    ~\\
     \textbf{Step 2:} Now we prove $\hat{q}_{t+1}=q_{t+1}$.
    We just constructed
    \begin{gather*}    \hat{w}^{(t)}_{\geq t+1}=\underset{v_{\geq t+1}\in\mathbb{R}^{N-t}}{\mathrm{argmin}}\frac{1}{2}\|Xw-\sum_{j=1}^{t}\hat{q}_{j}X_{j}-\sum_{j=t+1}^{N}v_jX_j\|^2.
    \end{gather*}
    This implies \begin{gather}\label{formla:orthogonal}
    Xw-\sum_{j=1}^{t}\hat{q}_{j}X_{j}-\sum_{j=t+1}^{N}\hat{w}^{(t)}_{j}X_j=P_{X_{\geq t+1}^{\perp}}(Xw-\sum_{j=1}^{t}\hat{q}_{j}X_{j})\in X_{\geq t+1}^{\perp}.
    \end{gather}
    By construction, we have
    \begin{align*}
        \hat{q}_{t+1}&=\underset{q\in\mathcal{A}}{\mathrm{argmin}}\frac{1}{2}\|Xw-\sum_{j=1}^{t}\hat{q}_{j}X_{j}-qX_{t+1}-\sum_{j=t+2}^{N}\hat{w}^{(t)}_{j}X_{j}\|^2\\
        &=\Q\left(\frac{\langle X_{t+1},Xw-\sum_{j=1}^{t}\hat{q}_{j}X_{j}-\sum_{j=t+2}^{N}\hat{w}^{(t)}_{j}X_{j}\rangle}{\|X_{t+1}\|^2}\right).
    \end{align*}
    Then we can use \Eqref{formla:orthogonal} to deduce
    \begin{align*}
        &\frac{\langle X_{t+1},Xw-\sum_{j=1}^{t}\hat{q}_{j}X_{j}-\sum_{j=t+2}^{N}\hat{w}^{(t)}_{j}X_{j}\rangle}{\|X_{t+1}\|^2}\\
        &=\frac{\langle X_{t+1},Xw-\sum_{j=1}^{t}\hat{q}_{j}X_{j}-\sum_{j=t+1}^{N}\hat{w}^{(t)}_{j}X_{j}+X_{t+1}\hat{w}^{(t)}_{t+1}\rangle}{\|X_{t+1}\|^2}\\
        &=\frac{\langle X_{t+1},Xw-\sum_{j=1}^{t}\hat{q}_{j}X_{j}-\sum_{j=t+1}^{N}\hat{w}^{(t)}_{j}X_{j}\rangle}{\|X_{t+1}\|^2}+\frac{\langle X_{t+1},X_{t+1}\hat{w}^{(t)}_{t+1}\rangle}{\|X_{t+1}\|^2}\\
        &=\frac{\langle X_{t+1},X_{t+1}\hat{w}^{(t)}_{t+1}\rangle}{\|X_{t+1}\|^2}\\
        &=\hat{w}^{(t)}_{t+1}\\
        &=w^{(t)}_{t+1}.
    \end{align*}
    The last step $\hat{w}^{(t)}_{t+1}=w^{(t)}_{t+1}$ follows from what we just proved in Step 1 that $\hat{w}^{(t)}_{\geq t+1}={w}^{(t)}_{\geq t+1}$.
    Thus we know
    \begin{gather*}
    \hat{q}_{t+1}=\Q(\hat{w}^{(t)}_{t+1})=\Q(w^{(t)}_{t+1})=q_{t+1}.
    \end{gather*}
    This completes our induction.
\end{proof}

\end{document}